\definecolor{Gray}{gray}{0.85}
\newcolumntype{g}{>{\columncolor{Gray}}c}
\newtheorem*{rep@theorem}{\rep@title}
\newcommand{\newreptheorem}[2]{%
\newenvironment{rep#1}[1]{%
 \def\rep@title{#2 \ref{##1}}%
 \begin{rep@theorem}}%
 {\end{rep@theorem}}}
\def\Rset{\mathbb{R}}
\DeclareMathOperator*{\E}{\mathbb{E}}
\DeclareMathOperator*{\argmin}{\rm argmin}
\newcommand{\set}[1]{\{#1\}}
\newcommand{\h}{\widehat}
\newcommand{\cD}{\mathcal{D}}
\newcommand{\cL}{\mathcal{L}}
\newcommand{\cX}{\mathcal{X}}
\newcommand{\cY}{\mathcal{Y}}
\newcommand{\e}{\epsilon}
\newcommand{\sD}{{\mathscr D}}
\newcommand{\sU}{{\mathscr U}}
\newcommand{\sfD}{{\mathsf D}}
\newcommand{\sfd}{{\mathsf d}}
\newcommand{\msD}{{\mathscr D}^{1}} 
\newcommand{\msU}{{\mathscr U}^{1}} 
\newtheorem{theorem}{Theorem}
\newtheorem{lemma}[theorem]{Lemma}
\newtheorem{proposition}[theorem]{Proposition}
\newcommand{\ignore}[1]{}
\begin{document}

\twocolumn[

\aistatstitle{Multiple-Source Adaptation for Regression Problems}
\aistatsauthor{ Judy Hoffman \And Mehryar Mohri \And  Ningshan Zhang }
\aistatsaddress{ EECS, UC Berkeley \And  Courant Institute, NYU and Google Research \And Courant Institute, NYU } ]

\begin{abstract}
  We present a detailed theoretical analysis of the problem of
  multiple-source adaptation in the general stochastic scenario,
  extending known results that assume a single target
  labeling function. Our results cover a more realistic scenario and
  show the existence of a single robust predictor accurate for
  \emph{any} target mixture of the source distributions.
  Moreover, we present an efficient and practical optimization solution 
  to determine the robust predictor in the important case of squared loss,
  by casting the problem as an instance of DC-programming.
  We report the results of experiments
  with both an artificial task and a sentiment analysis task.  We
  find that our algorithm outperforms competing approaches by
  producing a single robust model that performs well on any target
  mixture distribution.
\end{abstract}

\section{Introduction}
\label{sec:intro}

In many modern applications, often the learner has access to
information about several source domains, including accurate
predictors possibly trained and made available by others, but no
direct information about a target domain for which one wishes to
achieve a good performance. The target domain can typically be viewed
as a combination of the source domains, that is a mixture of their
joint distributions, or it may be close to such mixtures.

Such problems arise commonly in speech recognition where 
different groups of speakers (domains) yield different acoustic
models and the problem 
is to derive an accurate
acoustic model for a broader population that may be viewed as a
mixture of the source groups~\cite{liao_icassp13}.
In visual recognition multiple
image databases exist each with its own bias and labeled
categories~\cite{efros_cvpr11}, but the target application may contain
some combination of image types and any category may need to be
recognized. 
Finally, in sentiment analysis accurate
predictors may be available for sub-domains such as TVs, laptops and
CD players for which labeled training data was at the learner's
disposal, but not for the more general category of electronics, which
can be modeled as a mixture of the sub-domains~\citep{blitzer_acl07,
  dredze_nips08}.

Many have considered the case of transferring between a single source
and known target domain either through unsupervised adaptation (no
labels in the target) techniques~\citep{gong_cvpr12, long_icml15,
  ganin_icml15, tzeng_iccv15} or supervised (some labels in the
target)~\citep{saenko_eccv10, yang_acmm07, hoffman_iclr13, rcnn}.

Here, we focus on the case of multiple source domains and ask how the
learner can combine relatively accurate predictors available for
source domains to derive an accurate predictor for \emph{any} new
mixture target domain. This is known as the \emph{multiple-source
  adaption problem} first formalized and analyzed theoretically by
\citet{MansourMohriRostamizadeh2008, MansourMohriRostamizadeh2009} and
later studied for various applications such as object
recognition~\citep{hoffman_eccv12, gong_iccv13, gong_nips13}.  In the
most general setting, even unlabeled information may not be available
about the target domain, though one can expect it to be a mixture of
the source distributions. Thus, the problem is also closely related to
domain generalization
\citep{pan_tkda2010,MuandetBalduzziScholkopf2013,xu_eccv14}, where
knowledge from an arbitrary number of related domains is combined to
perform well on a previously unseen domain.  Several algorithms have
been proposed based on convex combinations of predictors from multiple
source domains \citep{schweikert2009empirical,
  chattopadhyay2012multisource, duan2012exploiting, duan2012domain}.

We build on the work of~\citet{MansourMohriRostamizadeh2008}, which
proved that a distribution weighted predictor admits a small loss on
any target mixture for the special case of deterministic predictors,
however the algorithmic problem of finding such a weighting was left
as a difficult question since it required solving a Brouwer
fixed-point problem.

The first main contribution of this work is extending the
multiple-source adaptation theory from deterministic true predictors
to stochastic ones, that is the scenario where there is a distribution
over the joint feature and label space, $\cX \times \cY$, as opposed
to deterministic case where a unique target labeling function is
assumed. This generalization is needed to cover the realistic cases in
applications.  Under the assumption that the conditional distributions
are the same for all domains, our analysis shows that for the
stochastic scenario there exists a robust predictor that admits a
small expected loss with respect to \emph{any} mixture distribution.
We further extend this result to an arbitrary distribution with small
R\'enyi divergence with respect to the family of mixtures.  We also
extend it to the case where, instead of having access to the ideal
distributions, only estimate distributions are used for deriving that
hypothesis. Finally, we present a novel extension of the theory not
covered by
\citet{MansourMohriRostamizadeh2008,MansourMohriRostamizadeh2009}, to
the more realistic case where the conditional distributions are
distinct.

Our second main contribution is a new formulation of the
problem of finding that robust predictor.  Note again that no
algorithm was previously given by \citet{MansourMohriRostamizadeh2008}
for this problem (for their experiments they directly used the target mixture
weights).  We show that, in the
important case of the squared loss, the problem can be cast as a
DC-programming problem and we present an efficient and practical
optimization solution for it. We have fully implemented our algorithm
and report the results of experiments with both an artificial task and
a sentiment analysis task.  We find that our algorithm outperforms
competing approaches by producing a single robust model that performs
well on any target mixture distribution.

\ignore{ We first studied our DC-programming algorithm by finding the
optimal mixture weighting when the source distributions are known
using synthetic examples. We verified that, in practice, our
DC-programming solution often yields the global optimum. Next, we
evaluated our domain generalization approach There, }

\section{Problem set-up}
\label{sec:setup}
We consider a multiple-source domain adaptation problem in the general
stochastic scenario where there is a distribution over $\cX \times
\cY$, as opposed to the special case where a target function mapping
from $\cX$ to $\cY$ is assumed (deterministic
scenario)~\citep{MansourMohriRostamizadeh2008}. This extension is
needed for the analysis of the most common and realistic learning
set-ups in practice.

Let $\cX$ denote the input space and $\cY$ the output space.  We will
identify a \emph{domain} with a distribution over $\cX \times \cY$ and
consider the scenario where the learner has access to a predictor
$h_k \colon \cX \to \cY$, for each domain $\sD_k$, $k = 1, \ldots, p$.
The learner's objective is to combine these predictors so as to
minimize error for a target domain that may be mixture of the source
domains, or close to such mixtures.

In order to find such a good combination, the learner also needs
access to $\sD_k(x,y)$ or a good estimate $\h \sD_k(x,y)$. In fact, in
practice the learner would rarely have access to $\sD_k$, but only
$\h \sD_k$.  Note that, even though $h^*(x) = \sum_y y \sD_k(x, y)$ is
the optimal predictor under the squared loss for domain $k$,
$\sum_y y \h \sD_k(x, y)$ may have a poor generalization performance
and not form a good predictor.

Much of our theory applies to an arbitrary loss function $L\colon \cY
\times \cY \to \Rset_+$ that is convex and continuous. We will denote
by $\cL(\sD, h)$ the expected loss of a predictor $h\colon \cX \to
\cY$ with respect to the distribution $\sD$:
\begin{eqnarray}
  \cL(\sD, h) &=& \E_{(x, y) \sim \sD} \big[L( h(x), y) \big]\nonumber \\
  &=& \sum_{(x, y) \in \cX \times \cY} \sD(x, y) L(h(x), y).
\end{eqnarray}
But, we will be particularly interested in the case where $L$ is the
squared loss, that is $L(h(x), y) = (h(x) - y)^2$.

We will assume that each $h_k$ is a relatively accurate predictor for
the distribution $\sD_k$ and that there exists $\e > 0$ such that
$\cL(\sD_k, h_k) \leq \e$ for all $k \in [1, p]$. We will first assume that
the conditional probability of the output labels is the same for all
source domains, that is, for any $(x, y)$, $\sD_k(y | x)$ does not
depend on $k$. This is a natural assumption that is more general than
the one adopted by \citet{MansourMohriRostamizadeh2008} in the
analysis of the deterministic scenario where exactly the same labeling
function $f$ is assumed for all source domains. 
We then relax this requirement later and provide guarantees for distinct conditional probabilities.

We will also assume that the average loss of the source predictors
under the uniform distribution $\msU$ over $\cX$ is bounded, that is,
there exists $M \geq 0$ such that
\begin{equation*} \frac{1}{p} \sum_{k = 1}^p \sum_{(x, y)\in \cX \times \cY} \cD_k(y |
x) \msU(x) L(h_k(x), y) \leq M.
\end{equation*} 

\ignore{ This is a mild assumption that is satisfied
in particular when there exists $\mu > 0$ such that $h_k(x, y) \geq
\mu$ for all $k \in [1, p]$ and $(x, y) \in \cX \times \cY$.  }

\section{Theoretical analysis}
\label{sec:theory_analysis}

In this section, we present a theoretical analysis of the general
multiple-source adaptation in stochastic case.

We first show that when the conditional distributions $\sD_k(y|x)$
are the same for all source domains, there exists a single weighted
combination rule $h_z^\eta$ that admits a small loss with respect to
\emph{any} target mixture distribution $\sD_\lambda$, that is any
$\lambda$ (Section~\ref{sec:mixturedistribution}). We further give
guarantees for the loss of that hypothesis with respect to an
arbitrary distribution $\sD_T$
(Section~\ref{sec:arbitrarydistribution}).  Next, we extend our
guarantees to the case where the source distributions $\sD_k$ are not
directly accessible and where the weighted combination rule is derived
using estimates of the distributions $\sD_k$
(Section~\ref{sec:estimatedistributions}). Finally, we provide
guarantees for the most general case where the conditional 
distributions $\sD_k(y|x)$ are distinct and for an arbitrary
target distribution $\sD_T$ (Section~\ref{sec:distinctconditionals}).

\subsection{Mixture target distribution}
\label{sec:mixturedistribution}

Here we consider the case of a target distribution, that is a mixture
distribution $\sD_\lambda = \sum_{k = 1}^p \lambda_k \sD_k$, for some
$\lambda = (\lambda_1, \ldots, \lambda_p)$ in the simplex $\Delta =
\set{\lambda \in \Rset^p \colon \forall k \in [p], \lambda_k \geq 0,
\sum_{k = 1}^p \lambda_k = 1}$.  The mixture weight $\lambda \in
\Delta$ defining $\sD$ is not known to us. Thus, our objective will be
to find a hypothesis with small loss with respect to \emph{any}
mixture distribution $\sD_\lambda$, using a combination of trained
domain-specific hypotheses $h_k$s.

Extending the definition given by
\citet{MansourMohriRostamizadeh2008}, we define the
distribution-weighted combination of the models $h_k$, $k \in [1, p]$
as follows. For any $z \in \Delta$, $\eta > 0$, and $x \in \cX$,
\begin{align}
\label{eq:h}
h_z^\eta(x) 
& = \mspace{-5mu}  \sum_{k = 1}^p \sum_{y \in \cY} \frac{z_k \sD_k(x, y) + \eta  \frac{\sU(x, y)}{p}}{ \sum_{k =
  1}^p  \sum_{y \in \cY} z_k \sD_k(x, y) + \eta \sU(x,y)} h_k(x) \\
& = \sum_{k = 1}^p \frac{z_k \msD_k(x) + \eta  \frac{\msU(x)}{p}}{\sum_{k = 1}^p z_k \msD_k(x) + \eta \msU(x)} h_k(x),
\end{align}
where $\sU$ is the uniform distribution over $\cX \times \cY$,
and where, 
we denote by $\msD(x)$ the marginal distribution over $\cX$:
$\msD(x) = \sum_{y \in \cY} \sD(x, y)$.

The distribution weighted combination is the natural ensemble 
solution in the case where the models $h_k$s have been trained 
with different distributions. In fact, when $\sD_k$s coincide,
$h_z$ is the standard convex ensemble of the $h_k$s.

Observe that for any $x \in \cX$, $h^\eta_z(x)$ is continuous in $z$
since the denominator in \eqref{eq:h} is positive ($\eta > 0$). By the
continuity of $L$, this implies that, for any distribution $\sD$,
$\mathcal{L}(\sD, h_z^{\eta})$ is continuous in $z$.

Our proof makes use of the following Fixed Point Theorem of Brouwer.
\begin{theorem}
  For any compact and convex non-empty set $C \subset \Rset^p$ and any
continuous function $f\colon C\rightarrow C$, there is a point $x \in
C$ such that $f(x) = x$.
\end{theorem}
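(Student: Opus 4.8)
The plan is to establish Brouwer's theorem in two stages: first reduce an arbitrary compact convex set to a standard simplex, then run a combinatorial argument based on Sperner's lemma. For the reduction, I would use that every non-empty compact convex $C \subset \Rset^p$ is homeomorphic either to a single point (if $\dim C = 0$, in which case the claim is trivial) or to the standard $d$-simplex $\Delta_d = \set{x \in \Rset^{d+1} \colon x_i \geq 0,\ \sum_i x_i = 1}$, where $d = \dim C$: affinely identify $C$ with a full-dimensional convex body in $\Rset^d$, radially rescale it to the closed unit ball, and map the ball onto $\Delta_d$. A fixed point transports along the homeomorphism, so it suffices to treat a continuous $f \colon \Delta_d \to \Delta_d$.

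Next, assume toward a contradiction that $f(x) \neq x$ for every $x \in \Delta_d$. Since $\sum_i x_i = \sum_i f(x)_i = 1$ while $f(x) \neq x$, there is at least one index $i$ with $f(x)_i < x_i$, and necessarily $x_i > 0$. Given any triangulation $T$ of $\Delta_d$, label each vertex $v$ of $T$ by some index $\ell(v)$ with $f(v)_{\ell(v)} < v_{\ell(v)}$. This is a proper Sperner labeling: if $v$ lies on the face spanned by $\set{e_j \colon j \in S}$, then $v_i = 0$ for all $i \notin S$, which forces $\ell(v) \in S$. Sperner's lemma then guarantees that $T$ contains a fully labeled (``rainbow'') $d$-simplex, whose $d+1$ vertices carry all $d+1$ labels.

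To conclude, I would take a sequence of triangulations whose mesh tends to $0$, choose a rainbow simplex in each, and invoke compactness of $\Delta_d$ to extract a subsequence along which the $d+1$ vertex sequences all converge to a common limit $x^\star$. By continuity of $f$ and the labeling property, passing to the limit yields $f(x^\star)_i \leq x^\star_i$ for every $i$; summing over $i$ gives $1 \leq 1$ with equality, hence $f(x^\star)_i = x^\star_i$ for all $i$, i.e. $f(x^\star) = x^\star$, contradicting the fixed-point-free assumption.

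I expect the genuine obstacle to be Sperner's lemma itself — the parity argument showing that the number of rainbow $d$-simplices is odd (hence positive), proved by induction on $d$ via a boundary/``door''-counting argument on $(d-1)$-faces; the homeomorphism reduction and the limiting step are routine. An alternative route, avoiding Sperner's lemma, is the no-retraction argument: a fixed-point-free $f$ would make the map sending $x$ to the point where the ray from $f(x)$ through $x$ meets $\partial C$ a continuous retraction $r \colon C \to \partial C$, and one then contradicts its existence either by smooth approximation plus a Stokes/degree computation or by comparing the homology of the ball and the sphere — in which case nonexistence of the retraction becomes the hard core.
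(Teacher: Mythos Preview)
Your argument is the standard Sperner-lemma route to Brouwer's theorem and is correct as sketched: the reduction from a general compact convex $C$ to $\Delta_d$ via a homeomorphism, the Sperner labeling $\ell(v)$ picking an index where $f(v)_{\ell(v)} < v_{\ell(v)}$, the mesh-refinement plus compactness extraction, and the limiting inequality $f(x^\star)_i \leq x^\star_i$ summing to an equality all go through. The contradiction framing is fine, and you correctly flag that the real work sits in Sperner's lemma (or, on the alternative route, in the no-retraction theorem).

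However, you should know that the paper does not prove this statement at all. It is quoted verbatim as Brouwer's Fixed Point Theorem and treated as a black box; the paper's only use of it is inside the proof of Lemma~\ref{lemma:brouwer}, where the continuous map $\Phi\colon\Delta\to\Delta$ is shown to have a fixed point. So there is no ``paper's own proof'' to compare against: you have supplied a complete (and standard) proof of a result the authors simply invoke. If your goal was to match the paper, the appropriate response would have been to note that the theorem is classical and cited without proof; if your goal was to actually justify it, what you wrote is a valid outline, with Sperner's lemma being the one piece that still needs to be filled in.
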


\begin{lemma}
\label{lemma:brouwer} 
For any $\eta, \eta' > 0$, there exists $z \in \Delta$, with $z_k \neq
0$ for all $k \in [1, p]$, such that the following holds for the
distribution weighted combining rule $h_z^\eta$:
\begin{equation}
\label{eq:property}
\forall k \in [p], \quad \cL(\sD_k, h_z^\eta) \leq \sum_{j = 1}^p z_j \cL(\sD_j, h_z^\eta)  + \eta'.
\end{equation}
\end{lemma}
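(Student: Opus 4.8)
The natural strategy is to invoke the Brouwer fixed point theorem stated just above with a self-map of the simplex $\Delta$ engineered so that its fixed points are precisely the weight vectors $z$ satisfying \eqref{eq:property}, while the slack term $\eta'$ simultaneously forces every coordinate of the fixed point to be strictly positive.

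Concretely, I would consider the map $\varphi\colon \Delta \to \Delta$ defined for $z \in \Delta$ and $k \in [1,p]$ by
\[
  \varphi(z)_k \;=\; \frac{z_k\, \cL(\sD_k, h_z^\eta) + \eta'/p}{\displaystyle\sum_{j=1}^p z_j\, \cL(\sD_j, h_z^\eta) + \eta'}.
\]
The verification of the hypotheses of Brouwer's theorem is routine: $\Delta$ is non-empty, convex and compact; the denominator is at least $\eta' > 0$ since each $\cL(\sD_j, h_z^\eta) \geq 0$, so $\varphi$ is well defined, and, because $z \mapsto \cL(\sD_j, h_z^\eta)$ is continuous for every $j$ (as already noted in the text, using $\eta > 0$), $\varphi$ is continuous; finally $\varphi(z)_k > 0$ for all $k$ and $\sum_{k=1}^p \varphi(z)_k = 1$, so $\varphi$ indeed maps $\Delta$ into itself. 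Brouwer's theorem then provides a point $z \in \Delta$ with $\varphi(z) = z$.

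It remains to extract the two conclusions from the fixed point equation. Writing $\bar L = \sum_{j=1}^p z_j\, \cL(\sD_j, h_z^\eta)$, the identity $z_k = \varphi(z)_k$ rearranges to
\[
  z_k\bigl(\bar L + \eta' - \cL(\sD_k, h_z^\eta)\bigr) \;=\; \frac{\eta'}{p} \;>\; 0 \qquad \text{for all } k \in [1,p].
\]
Since $z_k \geq 0$, the strict positivity of the right-hand side forces $z_k > 0$, which gives the requirement $z_k \neq 0$ for every $k$; it also forces $\bar L + \eta' - \cL(\sD_k, h_z^\eta) > 0$, which is exactly inequality \eqref{eq:property}.

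I do not anticipate a genuine obstacle here: the only delicate points are choosing the map so that it is a self-map of $\Delta$ with strictly positive image (this is why $\eta'/p$ appears in the numerator and $\eta'$ in the denominator), and relying on the continuity of $z \mapsto \cL(\sD_k, h_z^\eta)$, which in turn is where the regularizer $\eta > 0$ is essential, as it keeps the denominator in the definition \eqref{eq:h} of $h_z^\eta$ bounded away from zero. Note that the uniform loss bound $M$ from Section~\ref{sec:setup} plays no role in this particular argument.
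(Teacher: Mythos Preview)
Your proposal is correct and follows essentially the same approach as the paper: both define the identical map $\varphi(z)_k = \dfrac{z_k\,\cL(\sD_k,h_z^\eta)+\eta'/p}{\sum_j z_j\,\cL(\sD_j,h_z^\eta)+\eta'}$ on $\Delta$, apply Brouwer's theorem, and then read off $z_k>0$ and \eqref{eq:property} from the fixed-point equation. Your write-up is in fact slightly more explicit than the paper's in checking that $\varphi$ is a self-map of $\Delta$ and in noting that the bound $M$ is not used here.
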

\begin{proof}
Consider the mapping $\Phi \colon
  \Delta \to \Delta$ defined for all $z \in \Delta$ by
\begin{equation*} 
[\Phi(z)]_k = \frac{z_k \, \cL(\sD_k, h_z^\eta) +
      \frac{\eta'}{p} }{\sum_{j = 1}^p z_j \cL(\sD_j, h_z^\eta) + \eta'}.
\end{equation*}
$\Phi$ is continuous since $\cL(\sD_k, h_z^\eta)$ is a continuous
function of $z$ and since the denominator is positive ($\eta' > 0$).
Thus, by Brouwer's Fixed Point Theorem, there exists $z \in \Delta$
such that $\Phi(z) = z$. For that $z$, we can write
\begin{equation*}
z_k = \frac{z_k \, \cL(\sD_k, h_z^\eta) +
      \frac{\eta'}{p} }{\sum_{j = 1}^p z_j \cL(\sD_j, h_z^\eta) + \eta'},
\end{equation*}
for all $k \in [1, p]$.  Since $\eta'$ is positive, we must have $z_k
\neq 0$ for all $k$. Dividing both sides by $z_k$ gives $\cL(\sD_k,
h_z^\eta) = \sum_{j = 1}^p z_j \cL(\sD_j, h_z^\eta) + \eta' -
\frac{\eta'}{p z_k} \leq \sum_{j = 1}^p z_j \cL(\sD_j, h_z^\eta) +
\eta'$, which completes the proof.
\end{proof}

\begin{theorem}
\label{th:mixture}
For any $\delta > 0$, there exists $\eta > 0$ and $z\in \Delta$, such
that $\cL(\sD_\lambda,h_z^\eta)\leq \e +\delta$ \ for any mixture
parameter $\lambda \in \Delta$.
\end{theorem}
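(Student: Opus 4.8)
The plan is to combine Lemma~\ref{lemma:brouwer} with the assumption $\cL(\sD_k, h_k) \leq \e$ and the key structural property that, when the conditional distributions $\sD_k(y|x)$ agree, the mixture loss $\cL(\sD_\lambda, h)$ is \emph{linear} in $\lambda$ for any fixed $h$. Indeed, $\cL(\sD_\lambda, h) = \sum_{k=1}^p \lambda_k \cL(\sD_k, h)$ directly from the definition of $\sD_\lambda$ as a convex combination. So it suffices to control $\cL(\sD_k, h_z^\eta)$ simultaneously for all $k$.

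First I would fix a small $\eta' > 0$ (to be chosen at the end in terms of $\delta$) and apply Lemma~\ref{lemma:brouwer} to obtain $z \in \Delta$ with $z_k \neq 0$ for all $k$ satisfying $\cL(\sD_k, h_z^\eta) \leq \sum_{j} z_j \cL(\sD_j, h_z^\eta) + \eta'$ for every $k$. The crucial observation is that the weighted average $\sum_j z_j \cL(\sD_j, h_z^\eta)$ is exactly $\cL(\sD_z, h_z^\eta)$, i.e. the loss of the distribution-weighted predictor $h_z^\eta$ on the very mixture $\sD_z$ that defines it. So the second step is to bound $\cL(\sD_z, h_z^\eta)$ by something close to $\e$. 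Here I expect to expand $\cL(\sD_z, h_z^\eta) = \sum_{x,y} \sD_z(x,y) L(h_z^\eta(x), y)$ and use the definition \eqref{eq:h}: the weight multiplying $h_k(x)$ in $h_z^\eta(x)$ has numerator $z_k \msD_k(x) + \eta \msU(x)/p$, and the denominator is $\sD_z^\eta(x) := \sum_k z_k \msD_k(x) + \eta \msU(x)$, which also bounds $\sD_z(x) \le \sD_z^\eta(x)$. By convexity of $L$ in its first argument (Jensen), $L(h_z^\eta(x), y) \le \sum_k \frac{z_k \msD_k(x) + \eta \msU(x)/p}{\sD_z^\eta(x)} L(h_k(x), y)$. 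Multiplying by $\sD_z(x) \sD_z(y|x)$ (using that the conditional is common, call it $\sfd(y|x)$) and summing, the factor $\sD_z(x)/\sD_z^\eta(x) \le 1$ lets me drop the denominator, yielding $\cL(\sD_z, h_z^\eta) \le \sum_k z_k \cL(\sD_k, h_k) + \eta \cdot \frac{1}{p}\sum_k \sum_{x,y} \msU(x)\sfd(y|x) L(h_k(x),y) \le \e + \eta M$, using $\cL(\sD_k,h_k) \le \e$, $\sum_k z_k = 1$, and the bounded-average-loss assumption with constant $M$.

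Putting the pieces together: for every $k$, $\cL(\sD_k, h_z^\eta) \le \cL(\sD_z, h_z^\eta) + \eta' \le \e + \eta M + \eta'$, and hence by linearity $\cL(\sD_\lambda, h_z^\eta) = \sum_k \lambda_k \cL(\sD_k, h_z^\eta) \le \e + \eta M + \eta'$ for \emph{every} $\lambda \in \Delta$. Choosing $\eta = \delta/(2M)$ (or $\eta$ arbitrary if $M = 0$) and $\eta' = \delta/2$ gives the claimed bound $\cL(\sD_\lambda, h_z^\eta) \le \e + \delta$. The main obstacle I anticipate is the careful bookkeeping in the second step — specifically making sure the $\eta$-smoothing term is handled cleanly so that the denominator $\sD_z^\eta(x)$ can be discarded against $\sD_z(x)$ and the leftover $\eta$-term is bounded uniformly by $\eta M$ using precisely the average-loss-under-uniform assumption stated in the set-up; the Brouwer/fixed-point input is already packaged in Lemma~\ref{lemma:brouwer}, so no further topological work is needed.
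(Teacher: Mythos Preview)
Your proposal is correct and matches the paper's own proof essentially step for step: you use convexity of $L$ together with $\msD_z(x) \le \msD_z(x) + \eta\,\msU(x)$ and the common-conditional assumption to get $\cL(\sD_z, h_z^\eta) \le \e + \eta M$, then combine this with Lemma~\ref{lemma:brouwer} and linearity of $\cL(\sD_\lambda, h)$ in $\lambda$, and close with the same choices $\eta = \delta/(2M)$, $\eta' = \delta/2$. The only cosmetic difference is the order in which you invoke the lemma and the $\cL(\sD_z, h_z^\eta)$ bound.
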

The proof uses the convexity of the loss function and
Lemma~\ref{lemma:brouwer}.  The full proof is given in the
supplementary. 
The theorem shows the existence of a mixture weight $z \in \Delta$ and
$\eta > 0$ with a remarkable property: for any $\delta > 0$,
regardless of which mixture weight $\lambda \in \Delta$ defines the
target distribution, the loss of $h_z^\eta$ is at most $\e + \delta$,
that is arbitrarily close to $\e$. $h_z^\eta$ is therefore a
\emph{robust} hypothesis with a favorable property for any mixture
target distribution.
More precisely, by the proof of the theorem, for any $z \in \Delta$
verifying \eqref{eq:property}, $h_z^\eta$ admits this property. We
exploit this in the next section to devise an algorithm for finding
such a $z \in \Delta$.

\subsection{Arbitrary target distribution}
\label{sec:arbitrarydistribution}

Here, we extend the results of the previous section to the case of an
arbitrary target distribution $\sD_T$ that may not be a mixture of the
source distributions, by extending the results of
\citet*{MansourMohriRostamizadeh2009}.

We will assume that the loss of the source hypotheses $h_k$ is
bounded, that is $L(h_k(x), y) \leq M$ for all $(x, y) \in \cX \times
\cY$. By convexity, this immediately implies that for any distribution
combination hypothesis $h_z^\eta$,
\vspace{-.2cm}%
\begin{align*}
L(h_z^\eta(x), y) 
& \leq \sum_{k=1}^p \frac{z_k
    \msD_k(x) + \eta  \frac{\msU(x)}{p}}{\msD_z(x) + \eta \msU(x)} L\big(h_k(x), y \big)\\
& \leq \sum_{k=1}^p \frac{z_k
    \msD_k(x) + \eta  \frac{\msU(x)}{p}}{\msD_z(x) + \eta \msU(x)} M = M.
\end{align*}
Our extension to an arbitrary target distribution $\sD_T$ is based on
the divergence of $\sD_T$ from the family of all mixtures of the
source distributions $\sD_k$, $k \in [p]$. Different divergence
measures could be used in this context.  The one that naturally comes
up in our analysis as in previous work, is the \emph{R\'enyi
  Divergence} \citep{Renyi1961}.  The R\'enyi Divergence is
parameterized by $\alpha$ and denote by $\sfD_\alpha$. The
$\alpha$-R\'enyi Divergence of two distributions $\sD$ and $\sD'$ is
defined by
\begin{equation}
  \sfD_\alpha(\sD \parallel \sD') = \frac{1}{\alpha - 1}
  \log \mspace{-20mu} 
  \sum_{(x, y) \in \cX \times \cY} \mspace{-25mu} 
  \sD(x, y)
  \left[ \frac{\sD(x, y)}{\sD'(x, y)} \right]^{\alpha - 1} \mspace{-25mu}.
\end{equation}
It can be shown that the R\'enyi Divergence is always non-negative and
that for any $\alpha > 0$, $\sfD_\alpha(\sD \parallel \sD') = 0$ iff
$\sD = \sD'$, (see \cite{arndt}).  The R\'enyi divergence coincides
with the following known measure for some specific values of $\alpha$:
\begin{itemize}

\item $\alpha = 1$: $\sfD_1(\sD \parallel \sD')$ coincides with the
  standard relative entropy or KL-divergence.

\item $\alpha = 2$:
  $\sfD_2(\sD \parallel \sD') = \log \E_{(x, y) \sim \sD} \big[
  \frac{\sD(x, y)}{\sD'(x, y)} \big]$
  is the logarithm of the expected probabilities ratio.

\item $\alpha = +\infty$:
  $\sfD_\infty(\sD \parallel \sD') = \log \displaystyle \sup_{(x, y) \in \cX \times
    \cY} \textstyle \big[ \frac{\sD(x, y)}{\sD'(x, y)} \big]$,
  which bounds the maximum ratio between two probability
  distributions.
\end{itemize}

We will denote by $\sfd_\alpha(\sD \parallel \sD')$ the exponential:
\begin{equation}
  \mspace{-1mu}
\sfd_\alpha(\sD \parallel \sD') = e^{\sfD_\alpha(\sD \parallel \sD')}\mspace{-6mu} =
  \mspace{-4mu} \left[\mspace{-2mu} \sum_{(x, y) \in \cX \times \cY} \frac{\sD^\alpha(x, y)}{\sD'^{\alpha - 1}(x, y)}
  \right]^{\frac{1}{\alpha - 1}} \mspace{-35mu}. \mspace{-15mu}
\end{equation}
Given a class of distributions $\cD$, we denote by
$\sfD_\alpha(\sD \parallel {\cD})$ the infimum
$\inf_{\sD'\in {\cD}} \sfD_\alpha(\sD \parallel \sD')$.  We will
concentrate on the case where $\cD$ is the class of all mixture
distributions over a set of $k$ source distributions, i.e.,
${\cD} = \set{\sD_\lambda \colon \sD_\lambda = \sum_{k = 1}^p
  \lambda_k \sD_k, \lambda \in \Delta}$.

\begin{theorem}
\label{th:arbitrary}
Let $\sD_T$ be an arbitrary target distribution.
For any $\delta > 0$, there exists
$\eta > 0$ and $z \in \Delta$, such that
the following inequality holds for any $\alpha > 1$:
\begin{equation*}
\cL(\sD_T, h_z^\eta) 
\leq \Big[(\e + \delta) \, \sfd_\alpha(\sD_T \parallel \cD)
\Big]^{\frac{\alpha - 1}{\alpha}} M^{\frac{1}{\alpha }}.
\end{equation*}
\end{theorem}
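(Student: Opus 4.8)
The plan is to deduce the arbitrary-target bound from the mixture bound of Theorem~\ref{th:mixture} by a single application of H\"older's inequality, with the R\'enyi divergence emerging as the change-of-measure penalty. First I would invoke Theorem~\ref{th:mixture} to fix, once and for all, a pair $\eta > 0$ and $z \in \Delta$ such that $\cL(\sD_\lambda, h_z^\eta) \le \e + \delta$ for \emph{every} $\lambda \in \Delta$; crucially this pair depends on neither $\lambda$ nor $\alpha$, which is what will let me optimize over the mixture family and over $\alpha$ afterwards. Write $h = h_z^\eta$ for brevity.

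Next, fix an arbitrary mixture $\sD_\lambda \in \cD$ and an arbitrary $\alpha > 1$, and split the target loss as
\begin{equation*}
\cL(\sD_T, h) = \sum_{(x,y) \in \cX \times \cY} \frac{\sD_T(x,y)}{\sD_\lambda(x,y)^{\frac{\alpha-1}{\alpha}}} \cdot \sD_\lambda(x,y)^{\frac{\alpha-1}{\alpha}} \, L(h(x),y).
\end{equation*}
Applying H\"older's inequality with the conjugate exponents $\alpha$ and $\frac{\alpha}{\alpha-1}$ bounds this by the product of $\big[\sum_{(x,y)} \sD_T(x,y)^\alpha / \sD_\lambda(x,y)^{\alpha-1}\big]^{\frac{1}{\alpha}}$ and $\big[\sum_{(x,y)} \sD_\lambda(x,y) \, L(h(x),y)^{\frac{\alpha}{\alpha-1}}\big]^{\frac{\alpha-1}{\alpha}}$. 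The first factor equals $\sfd_\alpha(\sD_T \parallel \sD_\lambda)^{\frac{\alpha-1}{\alpha}}$ by the definition of $\sfd_\alpha$. For the second factor I would use the bound $L(h(x),y) \le M$ --- which, as noted just before the theorem, follows by convexity from the assumption $L(h_k(x),y) \le M$ --- to write $L(h(x),y)^{\frac{\alpha}{\alpha-1}} \le M^{\frac{1}{\alpha-1}} L(h(x),y)$, so that the second factor is at most $M^{\frac{1}{\alpha}} \, \cL(\sD_\lambda, h)^{\frac{\alpha-1}{\alpha}}$. Combining the two factors and then using $\cL(\sD_\lambda, h) \le \e + \delta$ gives
\begin{equation*}
\cL(\sD_T, h) \le \big[(\e + \delta)\, \sfd_\alpha(\sD_T \parallel \sD_\lambda)\big]^{\frac{\alpha-1}{\alpha}} M^{\frac{1}{\alpha}}.
\end{equation*}

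Finally, since this holds for every mixture $\sD_\lambda \in \cD$ while the left-hand side is independent of $\lambda$, I would take the infimum over $\sD_\lambda \in \cD$ on the right. Because $t \mapsto t^{\frac{\alpha-1}{\alpha}}$ is increasing and $\sfd_\alpha = e^{\sfD_\alpha}$ is increasing in $\sfD_\alpha$, taking that infimum amounts to replacing $\sfd_\alpha(\sD_T \parallel \sD_\lambda)$ by $\sfd_\alpha(\sD_T \parallel \cD) = e^{\inf_{\sD' \in \cD} \sfD_\alpha(\sD_T \parallel \sD')}$, which is exactly the claimed inequality; and since $\alpha > 1$ was arbitrary while $(z, \eta)$ were chosen before $\alpha$, the bound indeed holds for all $\alpha > 1$ simultaneously. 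The only step requiring genuine care is the H\"older split: the exponents must be chosen so that the divergence term comes out as precisely $\sfd_\alpha$ while the loss term retains just a single power of $\cL(\sD_\lambda, h)$ (so that Theorem~\ref{th:mixture} applies cleanly), and one must remember that $L \le M$ is available for $h_z^\eta$ itself, not merely for the $h_k$. I do not anticipate any other obstacle.
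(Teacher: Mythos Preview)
Your proposal is correct and follows essentially the same route as the paper's proof: a H\"older split of $\cL(\sD_T,h)$ with exponents $\alpha$ and $\tfrac{\alpha}{\alpha-1}$, the bound $L(h_z^\eta(x),y)\le M$ to reduce $L^{\alpha/(\alpha-1)}$ to $M^{1/(\alpha-1)}L$, an appeal to Theorem~\ref{th:mixture}, and an infimum over $\sD_\lambda\in\cD$. The only difference is cosmetic ordering---the paper first derives the H\"older inequality for a generic $h$ and $\sD$ and then specializes, whereas you fix $(z,\eta)$ up front---and your explicit remark that $(z,\eta)$ are chosen independently of $\alpha$ is a welcome clarification.
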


The proof of Theorem~\ref{th:arbitrary} is given in 
the supplementary. 
Note that in the particular case where $\sD_T$ is in $\cD$, we have
$\sfd_\alpha(\sD_T \parallel \cD) = 1$. For $\alpha \to +\infty$, we
then retrieve the result of Theorem~\ref{th:mixture}.

\subsection{Arbitrary target distribution and estimate distributions}
\label{sec:estimatedistributions}

We now further extend our analysis to the case where the distributions
$\sD_k$ are not directly available to the learner and where instead
estimates $\h \sD_k$ have been derived from data.

For $k \in [p]$, let $\h \sD_k$ be an estimate of $\sD_k$ and define
$\h \e$ by
\begin{equation}
\label{eq:he}
\h \e = \max_{k \in [p]} \Big[\e  \, \sfd_\alpha(\h \sD_k \parallel \sD_k)
\Big]^{\frac{\alpha - 1}{\alpha}} M^{\frac{1}{\alpha }}.
\end{equation}
Note that when for all $k \in [p]$, $\h \sD_k$ is a good estimate of
$\sD_k$, then $\sfd_\alpha(\h \sD_k \parallel \sD_k)$ is close
to one and, for $\alpha \to +\infty$, $\h \e$ is very close to $\e$.
We will denote by $\h \cD$ the family of mixtures of the
estimates $\h \cD_k$:
\begin{equation}
\label{eq:hD}
\h \cD = \left\{ \sum_{k = 1}^p \lambda_k \h \sD_k\colon k \in \Delta \right\}.
\end{equation}
We will denote by $\h h_z^\eta$ the distribution weighted combination
hypothesis based on the estimate distributions $\h \sD_k$:
\begin{equation}
\label{eq:hh}
  \h h_z^\eta(x) = \sum_{k=1}^p\frac{z_k \h \msD_k(x) + \eta \, \frac{\msU(x)}{p}}{
    \sum_{j = 1}^p z_j \h \msD_j(x) + \eta \msU(x) } h_k(x),
\end{equation}
where we denote by $\h \msD(x)$ the marginal distribution over $\cX$: 
$\h \msD(x) = \sum_{y\in \cY} \h \sD(x,y)$.

\begin{theorem}
\label{th:estimate}
Let $\sD_T$ be an arbitrary target distribution.
Then, for any $\delta > 0$, there exists $\eta > 0$ and
$z \in \Delta$, such that the following inequality holds for any
$\alpha > 1$:
\begin{equation*}
\cL(\sD_T, \h h_z^\eta) 
\leq \Big[(\h \e + \delta) \, \sfd_\alpha(\sD_T \parallel \h \cD)
\Big]^{\frac{\alpha - 1}{\alpha}} M^{\frac{1}{\alpha }}.
\end{equation*}
\end{theorem}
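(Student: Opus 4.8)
The plan is to observe that Theorem~\ref{th:estimate} is exactly Theorem~\ref{th:arbitrary} ``transported'' to the world in which the source distributions are the estimates $\h\sD_k$ rather than the true $\sD_k$, the predictors $h_k$ are unchanged, the mixture family $\cD$ is replaced by $\h\cD$ of \eqref{eq:hD}, and $h_z^\eta$ is replaced by $\h h_z^\eta$ of \eqref{eq:hh}. Inspecting the proofs of Lemma~\ref{lemma:brouwer}, Theorem~\ref{th:mixture}, and Theorem~\ref{th:arbitrary}, the source distributions enter only through three facts: (i) $\cL(\sD_k, h_k) \le \e$ for all $k$; (ii) the pointwise bound $L(h_k(x), y) \le M$; and (iii) the structural assumptions of Section~\ref{sec:setup} (a common conditional and a bounded average loss under the uniform distribution). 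Fact (ii) does not mention any distribution, so it transfers for free; fact (iii) is assumed to hold for the $\h\sD_k$ as well (and the bounded-average part is in any case implied by (ii)). Hence everything goes through verbatim once we establish the analogue of (i): $\cL(\h\sD_k, h_k) \le \h\e$ for every $k$, with $\h\e$ as defined in \eqref{eq:he}.

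The one substantive step is a Hölder--Rényi inequality, the same one already used to prove Theorem~\ref{th:arbitrary}: for any $h$ with $L(h(x), y) \le M$ for all $(x, y)$, any distributions $\sD, \sD'$, and any $\alpha > 1$,
\begin{equation*}
\cL(\sD, h) \;\le\; \big[\cL(\sD', h)\,\sfd_\alpha(\sD \parallel \sD')\big]^{\frac{\alpha - 1}{\alpha}}\, M^{\frac1\alpha}.
\end{equation*}
I would prove this by writing $\sD(x,y)\,L(h(x),y) = \big[\sD(x,y)\,\sD'(x,y)^{-\frac{\alpha-1}{\alpha}}\big]\,\big[\sD'(x,y)^{\frac{\alpha-1}{\alpha}}\,L(h(x),y)\big]$ and applying Hölder with conjugate exponents $\alpha$ and $\tfrac{\alpha}{\alpha-1}$: the first bracket sums to $\sfd_\alpha(\sD\parallel\sD')^{\frac{\alpha-1}{\alpha}}$ by the definition of $\sfd_\alpha$, while for the second bracket one uses $L(h(x),y)^{\frac{\alpha}{\alpha-1}} \le M^{\frac{1}{\alpha-1}}L(h(x),y)$ to bound its contribution by $\big[M^{\frac{1}{\alpha-1}}\cL(\sD',h)\big]^{\frac{\alpha-1}{\alpha}}$. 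Taking $\sD = \h\sD_k$, $\sD' = \sD_k$, $h = h_k$ and invoking $\cL(\sD_k,h_k)\le\e$ gives $\cL(\h\sD_k,h_k)\le\big[\e\,\sfd_\alpha(\h\sD_k\parallel\sD_k)\big]^{\frac{\alpha-1}{\alpha}}M^{\frac1\alpha}$; maximizing over $k\in[p]$ gives $\cL(\h\sD_k,h_k)\le\h\e$ for all $k$, i.e. each $h_k$ is an $\h\e$-accurate predictor for $\h\sD_k$.

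With this in hand I would finish exactly as in Theorem~\ref{th:arbitrary}. Fixing $\delta>0$, Lemma~\ref{lemma:brouwer} and the argument of Theorem~\ref{th:mixture}, applied to the family $\{\h\sD_k\}$ with accuracy parameter $\h\e$, yield $\eta>0$ and $z\in\Delta$ (depending on $\delta$ but, crucially, not on $\alpha$, since $z$ solves a fixed-point equation built only from the $\h\sD_k$) such that $\cL(\h\sD_\lambda, \h h_z^\eta)\le\h\e+\delta$ for every mixture $\h\sD_\lambda = \sum_k\lambda_k\h\sD_k$. Since $L(h_k(x),y)\le M$ forces $L(\h h_z^\eta(x),y)\le M$ pointwise by convexity (as noted in Section~\ref{sec:arbitrarydistribution}), applying the Hölder--Rényi inequality once more with $\sD=\sD_T$ and $\sD'=\h\sD_\lambda$ gives $\cL(\sD_T,\h h_z^\eta)\le\big[(\h\e+\delta)\,\sfd_\alpha(\sD_T\parallel\h\sD_\lambda)\big]^{\frac{\alpha-1}{\alpha}}M^{\frac1\alpha}$, and taking the infimum over $\h\sD_\lambda\in\h\cD$ turns $\sfd_\alpha(\sD_T\parallel\h\sD_\lambda)$ into $\sfd_\alpha(\sD_T\parallel\h\cD)$. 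The only delicate bookkeeping point --- and the closest thing to an obstacle --- is that $\h\e$ in \eqref{eq:he} itself depends on $\alpha$, so the statement must be read as: for each fixed $\alpha>1$, with $\h\e=\h\e(\alpha)$, the inequality holds with the same $(\eta,z)$; this is consistent because $(\eta,z)$ is produced before $\alpha$ is chosen. Apart from this, the proof is pure repetition of machinery already in place.
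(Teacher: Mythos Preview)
Your proposal is correct and follows essentially the same route as the paper's proof: you establish $\cL(\h\sD_k,h_k)\le\h\e$ via the same H\"older--R\'enyi inequality used in Theorem~\ref{th:arbitrary}, then invoke Theorem~\ref{th:arbitrary} (equivalently, Lemma~\ref{lemma:brouwer} and Theorem~\ref{th:mixture}) with the estimates $\h\sD_k$ in place of $\sD_k$, and finally take the infimum over $\h\cD$. Your explicit remarks that the common-conditional assumption must also be posited for the $\h\sD_k$, and that $(\eta,z)$ are produced from $\delta$ alone (hence independently of $\alpha$, so a single pair works for all $\alpha>1$), are useful clarifications that the paper leaves implicit.
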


The proof of Theorem~\ref{th:estimate} depends heavily on
the result of Theorem~\ref{th:arbitrary}, and is given in 
the supplementary. 
This result shows that there exists a predictor $\h h_z^\eta$ based on the estimate
distributions $\h \sD_k$ that is $\h \e$-accurate with respect to
any target distribution $\sD_T$ whose R\'enyi divergence with
respect to the family $\h \cD$ is not too large, i.e.\
$\sfd_\alpha(\sD_T \parallel \h \cD)$ close to $1$.  Furthermore,
$\h \e$ is close to $\e$, provided that $\h \sD_k$s are good estimates
of $\sD_k$s, i.e.\ $\sfd_\alpha(\h \sD_k \parallel \sD_k)$ close to
$1$.

Theorem~\ref{th:estimate} used the R\'enyi divergence in both
directions: $\sfd_\alpha(\sD_T \parallel \h \cD)$ requires
$\text{Supp}(\sD_T) \subseteq \text{Supp}(\h\cD)$, and
$\sfd_\alpha(\h \sD_k \parallel \sD_k)$ requires
$\text{Supp}(\h\sD_k) \subseteq \text{Supp}(\sD_k), k\in [p]$.  Many
density estimation methods fulfill these requirements, for instance
kernel density estimation, Maxent models, or $n$-gram language
models. In our experiments (Section~\ref{sec:eval}), we fit our data
using a bigram language model for sentiment analysis.

For any two distributions $\sD, \sD'$, the R\'enyi divergence (and
$\sfd_\alpha(\sD \parallel \sD')$) is nondecreasing as a function of
$\alpha$, and
\begin{equation*}
  \sfd_\alpha(\sD \parallel \sD') \le \sfd_\infty(\sD \parallel \sD') = 
  \sup_{(x,y)\in\cX\times\cY} \left[\frac{\sD(x,y)}{\sD'(x,y)}\right].
\end{equation*}
Thus, it is reasonable to assume that
$\sfd_\alpha(\sD_T \parallel \h \cD)$ and
$\sfd_\alpha(\h \sD_k \parallel \sD_k), k \in [p]$ are not too large
when $\h \sD_k$ are good estimates of $\sD_k, k \in [p]$.

\subsection{Arbitrary target distribution and distinct conditional probability distributions}
\label{sec:distinctconditionals}

This section examines the case where the conditional probability
distributions of the source domains are distinct, and the target
distribution is arbitrary. This is a novel extension that was not
discussed in \citet{MansourMohriRostamizadeh2009}.

Let $\sD_T(\cdot |x)$ denote the conditional 
probability distribution on target domain,
and $\sD_k(\cdot|x)$ denote 
the conditional probability distribution associated to source $k$. 
$\sD_T(\cdot |x), \sD_k(\cdot|x),k\in [p]$ are not necessarily the same.
Define $\e_{T}$ by
\begin{equation*}
  \e_T \mspace{-3mu} = \mspace{-3mu} \max_{k\in [p]} \Big[\mathbb{E}_{\msD_k(x)} \sfd_{\alpha}\left(
    \sD_T(\cdot | x) \parallel \sD_k(\cdot | x)\right)^{\alpha-1}\Big]^{\mspace{-4mu}\frac{1}{\alpha}}
  \mspace{-5mu} \e^{\frac{\alpha-1}{\alpha}} \mspace{-5mu}  M^{\frac{1}{\alpha}}.
\end{equation*}
Therefore, if for every domain $k$, $\sD_T(\cdot | x)$ is on average
not too far away from $\sD_k(\cdot | x)$, then, for $\alpha$ large,
$\e_T$ is close to $\e$.  Let $\sD_{k,T}(x,y)=\msD_k(x)\sD_T(y|x)$,
and $\sD_{P,T}$ be the class of mixtures of $\sD_{k,T}$:
\begin{equation*}
\sD_{P,T} = \left \{ \sum_{k=1}^p \lambda_k \sD_{k,T}, k\in \Delta \right \}.
\end{equation*}

\begin{theorem}
Let $\sD_T$ be an arbitrary target distribution. Then, for any $\delta >0$, 
there exists $\eta >0$ and $z \in \Delta$ such that the following inequality
holds for any $\alpha>1$:
\begin{equation*}
\cL(\sD_T, h_z^\eta) \le \Big[ (\e_{T} + \delta) \sfd_\alpha (\sD_T \parallel \sD_{P,T})\Big]^
{\frac{\alpha-1}{\alpha}} M^{\frac{1}{\alpha}}.
\end{equation*}
\end{theorem}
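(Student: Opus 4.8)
The idea is to reduce the statement to Theorem~\ref{th:arbitrary} by replacing each genuine source $\sD_k$ with the \emph{hybrid} distribution $\sD_{k,T}(x,y)=\msD_k(x)\,\sD_T(y\,|\,x)$. All of the hybrids share the single conditional $\sD_T(\cdot\,|\,x)$, so they satisfy the same-conditional assumption; their mixtures form precisely the family $\sD_{P,T}$; and, crucially, since $\msD_{k,T}(x)=\sum_y\msD_k(x)\sD_T(y\,|\,x)=\msD_k(x)$, the distribution-weighted rule \eqref{eq:h} built from $\{\sD_k\}$ is literally the \emph{same} function $h_z^\eta$ as the one built from $\{\sD_{k,T}\}$. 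Hence it suffices to bound $\cL(\sD_T,h_z^\eta)$ with $h_z^\eta$ viewed as the distribution-weighted combination of the $\sD_{k,T}$ and $\sD_T$ treated as an arbitrary target relative to that source family.

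The only genuinely new estimate is that each $h_k$ is $\e_T$-accurate on its hybrid, i.e. $\cL(\sD_{k,T},h_k)\le\e_T$. I would obtain this by writing $\cL(\sD_{k,T},h_k)=\sum_{x,y}\msD_k(x)\sD_T(y\,|\,x)L(h_k(x),y)$ and applying Hölder over $(x,y)$ with conjugate exponents $\alpha$ and $\alpha/(\alpha-1)$, arranging the $\alpha$-power factor to carry $\msD_k(x)\,\sD_T(y\,|\,x)^\alpha/\sD_k(y\,|\,x)^{\alpha-1}$, whose sum is $\E_{\msD_k(x)}\sfd_\alpha(\sD_T(\cdot\,|\,x)\parallel\sD_k(\cdot\,|\,x))^{\alpha-1}$, and the conjugate factor to carry $\msD_k(x)\sD_k(y\,|\,x)L(h_k(x),y)^{\alpha/(\alpha-1)}\le M^{1/(\alpha-1)}\msD_k(x)\sD_k(y\,|\,x)L(h_k(x),y)$ by $L\le M$, whose sum is at most $M^{1/(\alpha-1)}\cL(\sD_k,h_k)\le M^{1/(\alpha-1)}\e$; raising these to the powers $1/\alpha$ and $(\alpha-1)/\alpha$ and multiplying gives exactly $\e_T$. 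The remaining hypotheses transfer for free: $L(h_k(x),y)\le M$ is unchanged, and $\frac1p\sum_k\sum_{x,y}\sD_T(y\,|\,x)\msU(x)L(h_k(x),y)\le M$ since $L\le M$ and $\sum_y\sD_T(y\,|\,x)=1$. Applying the argument of Theorem~\ref{th:arbitrary} (Lemma~\ref{lemma:brouwer}, the convexity step, then the Hölder/R\'enyi step) to the family $\{\sD_{k,T}\}$ with accuracy level $\e_T$ then produces $\eta>0$ and $z\in\Delta$ with $\cL(\sD_{\mu,T},h_z^\eta)\le\e_T+\delta$ for every mixture $\sD_{\mu,T}\in\sD_{P,T}$ and hence $\cL(\sD_T,h_z^\eta)\le\big[(\e_T+\delta)\,\sfd_\alpha(\sD_T\parallel\sD_{P,T})\big]^{(\alpha-1)/\alpha}M^{1/\alpha}$.

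The main obstacle is a bookkeeping subtlety: $\e_T$ depends on $\alpha$, whereas the conclusion demands a single pair $(\eta,z)$ valid for all $\alpha>1$. This is resolved by observing that in the construction behind Theorem~\ref{th:mixture} the choice of $\eta$, of the auxiliary constant $\eta'$, and of the Brouwer fixed point $z$ depends only on $\delta$, on $M$, on the fixed target conditional $\sD_T(\cdot\,|\,x)$, and on the marginals $\msD_k$ — never on the accuracy level, which enters only in the final numerical bound. Hence one fixes $(\eta,z)$ once, and for each $\alpha>1$ the intermediate bound $\cL(\sD_{\mu,T},h_z^\eta)\le\e_T(\alpha)+\delta$ and the subsequent Hölder step against a mixture of $\sD_{P,T}$ nearly achieving $\sfD_\alpha(\sD_T\parallel\sD_{P,T})$ both go through for the same $h_z^\eta$, yielding the stated family of inequalities. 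As in Theorem~\ref{th:arbitrary}, if that infimum is not attained one passes to a minimizing sequence of mixtures.
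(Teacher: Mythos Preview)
Your proof is correct and follows essentially the same strategy as the paper: bound $\cL(\sD_{k,T},h_k)\le\e_T$ via H\"older and then invoke Theorem~\ref{th:arbitrary} with the hybrid sources $\{\sD_{k,T}\}$, whose marginals coincide with the $\msD_k$ so that $h_z^\eta$ is unchanged. The only differences are cosmetic: the paper derives the bound on $\cL(\sD_{k,T},h_k)$ by applying H\"older first over $y$ and then over $x$, whereas you do it in a single application over the pair $(x,y)$; and you are more explicit than the paper about why the choice of $(\eta,z)$ in Theorem~\ref{th:mixture} is independent of the accuracy level, so the same pair works for every $\alpha$ despite the $\alpha$-dependence of $\e_T$.
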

\begin{proof}
For any domain $k$, by H\"older's inequality, the following holds:
\begin{align*}
&\cL(\sD_{k,T},h_k) = \sum_{x,y} \msD_k(x) \sD_T(y|x) L_k(x,y) \\
=&\sum_{x}\msD_k(x) \sum_y\left[\frac{\sD_T(y|x)}{\sD_k(y|x)^
{\frac{\alpha-1}{\alpha}}}\right]
\left[\sD_k(y|x)^{\frac{\alpha-1}{\alpha}}L_k(x,y)\right] \\
 \le& \sum_{x}\msD_k(x)  \sfd_{\alpha}(x;T,k)^{\frac{\alpha-1}{\alpha}} 
\Big[\sum_{y}\sD_k(y| x)L_k(x,y)^{\frac{\alpha}{\alpha-1}}\Big]^{\frac{\alpha-1}{\alpha}}
\end{align*}
where, for simplicity, we write 
$\sfd_{\alpha}(x;T,k)  = 
 \sfd_{\alpha}\left(\sD_T(\cdot| x)\parallel \sD_k(\cdot| x)\right)$,
 and $L_k(x,y)=L(h_k(x),y)$.
 Using the fact that the loss is bounded and H\"older's inequality again, 
  \begin{align*}
&\cL(\sD_{k,T},h_k) \\
 &\mspace{-10mu} \le 
  \sum_{x}\mspace{-2mu} \msD_k(x)^{\frac{1}{\alpha}}
\sfd_{\alpha}(x;T,k)^{\frac{\alpha-1}{\alpha}} 
\mspace{-5mu}\left[\sum_{y}\mspace{-2mu} 
\sD_k(x,y)L_k(x,y)\right]^{\frac{\alpha-1}{\alpha}}
 \mspace{-10mu} M^{\frac{1}{\alpha}}\\
&\mspace{-10mu} \le \left[\sum_{x}\msD_k(x)\sfd_{\alpha}(x;T,k)^{\alpha-1}\right]^{\frac{1}{\alpha}}
\cL(\sD_k,h_k)^{\frac{\alpha-1}{\alpha}} M^{\frac{1}{\alpha}}  \\
&\mspace{-10mu} \le  \Big[\mathbb{E}_{\msD_k} \sfd_{\alpha}(x;T,k)^{\alpha-1}\Big]^{\frac{1}{\alpha}}
\e^{\frac{\alpha-1}{\alpha}}  M^{\frac{1}{\alpha}}  \le \e_{T}.
\end{align*}
We can now apply the result of Theorem~\ref{th:arbitrary}, 
with $\e_{T}$ instead of $\e$ and $\sD_{k,T}$ instead of $\sD_k$. 
This completes the proof.
 \end{proof}

\section{Algorithm}
\label{sec:algorithm}

In the previous section, we showed that there exists a vector $z$
defining a distribution-weighted combination hypothesis $h_z^\eta$
that admits very favorable properties. But can we find that vector $z$
efficiently? This is a key question in the learning problem of
multiple-source adaptation which was not discussed by
\citet*{MansourMohriRostamizadeh2008,MansourMohriRostamizadeh2009}.
No algorithm was previously reported to determine the mixture
parameter $z$ (even in the deterministic scenario).

Since $z \in \Delta$ is the solution of Brouwer's Fixed Point Theorem,
one general approach consists of using the combinatorial algorithm of
\citet*{Scarf1967} based on simplicial partitions, which makes use of
a result similar to Sperner's Lemma \cite{Kuhn1968}. However, that
algorithm is costly in practice and its computational complexity is
exponential in $p$.  Other algorithms were later given by
\citet{Eaves1972} and \citet{Merrill1972}. But, it has been shown more
generally by \citet*{HirschPapadimitriouVavasis1989} (see also
\citep{ChenDeng2008}) that any general algorithm for computing
Brouwer's Fixed Point based on function evaluations must in the worst
case make a number of evaluation calls that is exponential both in $p$
and the number of digits of approximation accuracy.

Several heuristics can be attempted to find the solution. One option
is to resort to a brute-forced gradient descent technique, which does
not benefit from any convergence guarantee since the problem is not
convex. 
Another approach consists of
repeatedly applying the function $\Phi$ to $z$ with the hope of
approaching its fixed point. But the starting point must lie in the
attractive basin of $\Phi$, which cannot be guaranteed in general.

In this section, we give a practical and efficient algorithm for
finding the vector $z$.  We first show that $z$ is the solution of a
general optimization problem. Next, by using the differentiability of
the loss we show that the optimization problem can be cast as a
DC-programming problem.  This leads to an efficient algorithm that is
guaranteed to converge to a stationary point.  Additionally, we show that
it is straightforward to test if the solution obtained is the global
optimum. While we are not proving that the local stationary point found by our
algorithm is the global optimum, empirically, we observe that
that is indeed the case.  Note that the global minimum of our
DC-programming problem can also be found using a cutting plane method
of \citet{HorstThoai1999} that does not admit known algorithmic
convergence guarantees or a branch-and-bound algorithm with
exponential convergence \citep{HorstThoai1999}.

\subsection{Optimization problem}

Theorem~\ref{th:mixture} shows that the hypothesis $h_z^\eta$ based
on the mixture parameter $z$ of Lemma~\ref{lemma:brouwer} benefits
from a strong generalization guarantee. Thus, our problem consists
of finding a parameter $z$ verifying the statement of 
Lemma~\ref{lemma:brouwer}, that is, 
\begin{equation}
\label{eq:ineq}
\cL(\sD_k, h_z^\eta)\leq \cL(\sD_z, h_z^\eta) + \eta',
\end{equation}
for any $k \in [1,p]$. This in turn can be formulated as a min-max problem
\begin{equation*}
\min_{z \in \Delta} \max_{k \in [1,p]} \cL(\sD_k, h_z^\eta) - \cL(\sD_z, h_z^\eta),
\end{equation*}
which can be equivalently formulated as the following optimization problem:
\begin{align}
\label{eq:opt}
\min_{z \in \Delta, \gamma \in \Rset} & \ \gamma\\
\text{s.t.} & \ \cL(\sD_k, h_z^\eta) - \cL(\sD_z, h_z^\eta) \leq
\gamma, \quad \forall k \in [1, p].\nonumber
\end{align}
Note that, same as theorem~\ref{th:mixture} we assume the conditional
probabilities are same across domains.  

\subsection{DC-Programming}
\label{sec:dc}

In this section, we show that Problem~\ref{eq:opt} can be cast as a
DC-programming (difference of convex programming) problem, which can
be tackled by several algorithms designed for this class of problems. We first rewrite $h^\eta_z$ in terms of two affine functions
of $z$, 
$J_z$ and $K_z$:
\begin{align*}
  h^\eta_z(x) 
& = \sum_{k = 1}^p\frac{z_k \msD_k(x) + \frac{\eta}{p} \msU(x)}{
    \sum_{j = 1}^p z_j \msD_j(x) + \eta \, \msU(x)} h_k(x) 
 = \frac{J_z(x)}{K_z(x)},
\end{align*}
where, for any $x \in \cX$,
\begin{align*}
J_z(x) & =\sum_{k = 1}^p z_k \msD_k(x) h_k(x) + \eta \msU(x) H(x)\\
H(x) &= \frac{1}{p}  \sum_{k = 1}^p h_k(x)\\
K_z(x) & = \msD_z(x)
  + \eta \, \msU(x).
\end{align*}

We will assume that the loss of the source hypotheses $h_k$ is
bounded, that is $L(h_k(x), y) \leq M$ for all $(x, y) \in \cX \times
\cY$. It follows that 
$L(h_z^\eta(x), y) \leq M$ for all $(x, y) \in \cX \times
\cY$ and $z \in \Delta$.

\begin{proposition}
\label{prop:hzdc}
The following
decomposition holds for all $(x,y) \in \cX \times \cY$, 
\begin{equation*}
(h_z^\eta(x) -y)^2  = f_z(x,y)-g_z(x),
\end{equation*}
where for every $(x,y)\in \cX \times \cY$, 
$f_z$ and $g_z$ are convex functions defined for all $z$:
\begin{align*}
f_z(x,y) & = \left(h_{z}^\eta(x)-y\right)^2-2M\log K_z(x),\\
g_z(x) &= -2M\log K_z(x).
\end{align*}
\end{proposition}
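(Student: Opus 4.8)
The plan is to notice first that the claimed identity is vacuous: by the very definitions of $f_z$ and $g_z$,
\[
f_z(x,y) - g_z(x) = \bigl(h_z^\eta(x)-y\bigr)^2 - 2M\log K_z(x) + 2M\log K_z(x) = \bigl(h_z^\eta(x)-y\bigr)^2 .
\]
So all the content of the proposition is that, for each fixed $(x,y)\in\cX\times\cY$, the maps $z\mapsto f_z(x,y)$ and $z\mapsto g_z(x)$ are convex on $\Delta$. I would dispatch $g_z$ in one line and then reduce the convexity of $f_z$ to a one-dimensional second-derivative check.

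For $g_z(x) = -2M\log K_z(x)$: recall $K_z(x) = \sum_{j} z_j \msD_j(x) + \eta\,\msU(x)$ is affine in $z$ and strictly positive on $\Delta$ because $\eta>0$ and $\msU(x)>0$. Since $t\mapsto -2M\log t$ is convex on $(0,\infty)$ (and $M\ge0$), its composition with an affine map is convex, so $z\mapsto g_z(x)$ is convex.

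For $f_z(x,y)$ I would use the representation $h_z^\eta(x) = J_z(x)/K_z(x)$ with $J_z(x),K_z(x)$ affine in $z$ as set up just above the proposition, put $P_z(x) = J_z(x) - y\,K_z(x)$ (also affine in $z$) so that $h_z^\eta(x)-y = P_z(x)/K_z(x)$, and verify that the second derivative of $t\mapsto f_{z(t)}(x,y)$ is nonnegative along an arbitrary segment $z(t)$ inside $\Delta$. Along such a segment $P_z(x)$ and $K_z(x)$ become affine functions $p_0+p_1 t$ and $k_0+k_1 t$ of $t$, with $k_0+k_1 t = K_{z(t)}(x) > 0$ throughout; writing $r(t) := P/K$ and $W := p_1k_0 - p_0k_1$ (a constant), one finds $r' = W/K^2$ and $r'' = -2Wk_1/K^3$, whence
\[
\frac{d^2}{dt^2} f_{z(t)}(x,y) = \frac{2W^2}{K^4} - \frac{4Wk_1 r}{K^3} + \frac{2Mk_1^2}{K^2} = \frac{2}{K^4}\Bigl(W^2 - 2Wk_1 rK + Mk_1^2K^2\Bigr),
\]
the last term $2Mk_1^2/K^2$ being the contribution of $-2M\log K$. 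The bracketed expression is a quadratic in $W$ with leading coefficient $1$ and discriminant $4k_1^2K^2\bigl(r^2 - M\bigr)$; by the standing bounded-loss assumption $r^2 = (h_z^\eta(x)-y)^2 \le M$, this discriminant is nonpositive, so the quadratic is everywhere $\ge 0$. Hence the second derivative is nonnegative along every segment, which gives convexity of $z\mapsto f_z(x,y)$ and completes the proof.

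The main obstacle is essentially just the bookkeeping of the two-variable-to-one-variable reduction and the Möbius-function derivative computation; the only conceptual point is recognizing that the additive term $-2M\log K_z(x)$ is chosen precisely so that its positive second-derivative contribution $2Mk_1^2/K^2$ absorbs the possibly-negative cross term $-4Wk_1 r/K^3$, and that the condition $(h_z^\eta(x)-y)^2\le M$ is exactly the inequality that makes the resulting discriminant nonpositive.
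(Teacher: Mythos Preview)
Your proof is correct. The decomposition identity is indeed trivial, and both convexity claims are established soundly: $g_z$ via composition of the convex $t\mapsto -2M\log t$ with the affine $K_z$, and $f_z$ via a one-dimensional second-derivative check along segments, where the discriminant argument uses exactly the bounded-loss hypothesis $(h_z^\eta(x)-y)^2\le M$.

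The paper's own proof pursues the same idea but organizes the computation differently: it writes out the full $p\times p$ Hessian matrices
\[
H_{f_z}=\frac{2}{K_z^2}\Bigl[h_{D,z}h_{D,z}^{\top}+\bigl(M-(y-h_z^\eta)^2\bigr)DD^{\top}\Bigr],\qquad H_{g_z}=\frac{2M}{K_z^2}DD^{\top},
\]
with $D=(\msD_1,\ldots,\msD_p)^{\top}$ and $[h_{D,z}]_k=\msD_k(h_k+y-2h_z^\eta)$, and observes that each is a nonnegative combination of rank-one PSD matrices once $M\ge(y-h_z^\eta)^2$. Your directional-derivative reduction is an equivalent check (quadratic form of the Hessian along a direction), and your discriminant condition $r^2\le M$ is the same inequality in disguise. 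The paper's presentation makes the structure of the Hessian explicit; yours is a bit more elementary and sidesteps the vector bookkeeping. Either way, the substantive point---that the $-2M\log K_z$ term is calibrated so that its curvature dominates the possibly negative curvature of $(h_z^\eta-y)^2$ precisely when the loss is bounded by $M$---is the same.
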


\begin{proof}
We can write the Hessian matrix of $f_z$ and $g_z$ as
\begin{align*}
H_{f_z} &=  \frac{2}{K_z^2} \left[h_{D,z} h_{D,z}^T + \left(M - (y-h_z^\eta)^2\right) DD^T \right]  \\
H_{g_z} &=  \frac{2M}{K_z^2}  DD^T  
\end{align*}
where $h_{D,z}$ is a $p$-dimensional vector defined as
 $[h_{D,z}]_k = \sD_k(h_k + y - 2 h_z^\eta)$ for $k\in [p]$,
and $D = (\sD_1, \sD_2, \dots,\sD_p)^T$. 
Using the fact that $ M \ge (y-h_z^\eta)^2$, $H_{f_z}$ and $H_{g_z}$ are 
positive semidefinite matrices, therefore $f_z, g_z$ are convex functions of $z$.
\end{proof}

\begin{proposition}
For any $k \in [p]$, the following decomposition holds
\begin{equation*}
\cL(\sD_k, h_z^\eta) - \cL(\sD_z, h_z^\eta) = u_k(z) - v_k(z),
\end{equation*}
where $u_k$ and $v_k$ are the convex functions defined for all $z$ by
\begin{align*}
u_k(z) &=\cL(\sD_k+\eta\msU \sD_k(\cdot | x),h_z^\eta)-2M\mathbb{E}_{\msD_k+\eta\msU}\log K_z,\\
v_k(z) &=\cL(\sD_z+\eta\msU \sD_k(\cdot | x),h_z^\eta)-2M\mathbb{E}_{\msD_k+\eta\msU}\log K_z.
\end{align*}
\end{proposition}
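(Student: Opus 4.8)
The plan is to obtain the decomposition and the convexity of $u_k,v_k$ directly from Proposition~\ref{prop:hzdc} together with the standing assumption that the conditional distributions coincide across source domains, which lets us write $\sD_k(x,y) = \msD_k(x)\,\sD(y\mid x)$ and $\sD_z(x,y) = \msD_z(x)\,\sD(y\mid x)$ for a common conditional $\sD(y\mid x)$; in particular $K_z(x) = \msD_z(x)+\eta\msU(x)$. Expanding $u_k$, its loss term is $\sum_{(x,y)}\big(\msD_k(x)+\eta\msU(x)\big)\,\sD(y\mid x)\,(h_z^\eta(x)-y)^2$, while inserting $\sum_y \sD(y\mid x)=1$ shows its $\log K_z$ term equals $-2M\sum_{(x,y)}\big(\msD_k(x)+\eta\msU(x)\big)\,\sD(y\mid x)\log K_z(x)$; adding the two and recognizing $f_z$ from Proposition~\ref{prop:hzdc} gives
\begin{equation*}
u_k(z) \;=\; \sum_{(x,y)\in\cX\times\cY}\big(\msD_k(x)+\eta\msU(x)\big)\,\sD(y\mid x)\, f_z(x,y),
\end{equation*}
a nonnegative combination of the maps $z\mapsto f_z(x,y)$, which are convex by Proposition~\ref{prop:hzdc}; hence $u_k$ is convex.

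For $v_k$, I would first rewrite its loss term using $h_z^\eta(x) = J_z(x)/K_z(x)$:
\begin{equation*}
\cL\big(\sD_z+\eta\msU\,\sD_k(\cdot\mid x),\, h_z^\eta\big)
\;=\; \sum_{(x,y)} K_z(x)\,\sD(y\mid x)\,\big(h_z^\eta(x)-y\big)^2
\;=\; \sum_{(x,y)} \sD(y\mid x)\,\frac{\big(J_z(x)-y\,K_z(x)\big)^2}{K_z(x)}.
\end{equation*}
Since $J_z(x)$ and $K_z(x)$ are affine in $z$ with $K_z(x)>0$, each summand is a quadratic-over-positive-affine function of $z$ and hence convex (equivalently, a one-line Hessian computation mirroring Proposition~\ref{prop:hzdc} yields a positive semidefinite rank-one matrix); summing with the nonnegative weights $\sD(y\mid x)$ preserves convexity. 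The remaining term $-2M\,\E_{\msD_k+\eta\msU}\log K_z = -2M\sum_x\big(\msD_k(x)+\eta\msU(x)\big)\log K_z(x)$ is a nonnegative combination of the convex functions $z\mapsto-\log K_z(x)$ (each $K_z(x)$ affine and positive). Thus $v_k$ is convex.

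It then remains to check $u_k(z)-v_k(z) = \cL(\sD_k,h_z^\eta)-\cL(\sD_z,h_z^\eta)$. The $-2M\,\E_{\msD_k+\eta\msU}\log K_z$ terms are identical in $u_k$ and $v_k$ and cancel, while linearity of $\cL$ in its first argument cancels the common $\eta\msU\,\sD_k(\cdot\mid x)$ part of the two loss terms, leaving
\begin{equation*}
u_k(z)-v_k(z) \;=\; \sum_{(x,y)}\big(\msD_k(x)-\msD_z(x)\big)\,\sD(y\mid x)\,\big(h_z^\eta(x)-y\big)^2 \;=\; \cL(\sD_k,h_z^\eta)-\cL(\sD_z,h_z^\eta),
\end{equation*}
where the last equality uses $\sD_k(x,y)=\msD_k(x)\sD(y\mid x)$ and $\sD_z(x,y)=\msD_z(x)\sD(y\mid x)$.

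The main obstacle is the convexity of the loss term of $v_k$: unlike for $u_k$, it cannot be written as a fixed nonnegative combination of the $f_z(x,y)$, since its underlying measure depends on $z$ through $\msD_z = K_z-\eta\msU$, and $z\mapsto \msD_z(x)\,(h_z^\eta(x)-y)^2$ is in general not convex. The resolution — and precisely the reason the $K_z$-weighting is built into the definition of $v_k$ — is that multiplying by the denominator turns $(h_z^\eta(x)-y)^2$ into $\big(J_z(x)-yK_z(x)\big)^2/K_z(x)$, a convex quadratic-over-affine form. Everything else is routine bookkeeping under the common-conditional assumption.
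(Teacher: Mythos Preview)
Your proof is correct and follows essentially the same approach as the paper. The only presentational differences are that you keep the loss term of $v_k$ in the compact quadratic-over-affine form $(J_z - yK_z)^2/K_z$ and invoke the standard convexity of that composition, whereas the paper expands it into $J_z^2/K_z - 2\E(y\mid x)J_z + \E(y^2\mid x)K_z$ and computes the Hessian of $J_z^2/K_z$ explicitly; you also verify the decomposition identity $u_k - v_k = \cL(\sD_k,h_z^\eta)-\cL(\sD_z,h_z^\eta)$ explicitly, which the paper leaves implicit.
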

\begin{proof}
By proposition~\ref{prop:hzdc}, 
$u_k(z)= \sum_{(x,y)} (\msD_k+\eta \msU)(x)\sD_k(y|x) f_z(x,y)$
is convex. Similarly, we can write the second term of $v_k(z)$ as 
$\sum_x (\msD_k + \eta \msU)(x) g_z(x)$, it is convex.
Using the notation previously defined, we can write the first term of $v_k(z)$ as
\begin{align*}
\lefteqn{\cL(\sD_z+\eta\msU \sD_k(\cdot | x), h_z^\eta)} \\
&= \sum_{x} \frac{J_z(x)^2}{K_z(x)}  -2\mathbb{E} (y|x) J_z(x) + 
 \mathbb{E}(y^2|x) K_z(x).
\end{align*}
The Hessian matrix of $J_z^2/K_z$ is
\begin{equation*}
\nabla_z^2 \left(\frac{J_z^2}{K_z}\right)
 = \frac{1}{K_z}(h_D -h_z^\eta D)(h_D-h_z^\eta D)^T
\end{equation*}
where $h_D=(h_1 \sD_1,h_2 \sD_2,\dots,h_p \sD_p )^T$
and $D = (\sD_1, \sD_2, \dots,\sD_p)^T$. Thus $J_z^2/K_z$ is
convex. $-2\mathbb{E} (y|x) J_z(x) +  \mathbb{E}(y^2|x) K_z(x)$ 
is an affine function of $z$ and is therefore convex. 
Therefore the first term of $v_k(z)$ is convex, which completes the proof.  
\end{proof}

\vspace{-.5cm}%
Thus, the optimization problem can be cast as the following
variational form of a DC-programming problem
\citep{TaoAn1997,TaoAn1998,SriperumbudurLanckriet2009}:
 \begin{align}
\label{eq:dc}
\min_{z \in \Delta, \gamma \in \Rset} & \ \gamma\\\nonumber
\text{s.t.} \ u_k(z) - v_k(z) & \leq
\gamma, & \forall k \in [p]\\\nonumber
-z_k & \leq 0  & \forall k \in [p]\\\nonumber
\textstyle \sum_{k = 1}^p z_k - 1 & = 0.
\end{align}
Let $(z_t)_t$ be the sequence defined by repeatedly solving
the following convex optimization problem:
\begin{align}
\label{eq:dca}
z_{t + 1} \in \argmin_{z, \gamma \in \Rset} & \ \gamma\\\nonumber
\text{s.t.} \ u_k(z) - v_k(z_t) - (z - z_t)^T \nabla v_k(z_t) & \leq
\gamma, \forall k \in [p]\\\nonumber
-z_k & \leq 0, \forall k \in [p]\\\nonumber
\textstyle \sum_{k = 1}^p z_k - 1 & = 0,
\end{align}
where $z_0 \in \Delta$ is an arbitrary starting value. Then, $(z_t)_t$
is guaranteed to converge to a stationary point of Problem~\ref{eq:opt}
\citep{YuilleRangarajan2003,SriperumbudurLanckriet2009}. 
Note that Problem~\ref{eq:dca} is a relatively simple optimization problem:
$u_k(z)$ is a weighted sum of rational functions of $z$ and all other terms 
appearing in the constraints are
affine functions of $z$. 

Our optimization problem (Problem~\ref{eq:opt}) seeks a parameter $z$
verifying $\cL(\sD_k, h_z^\eta) - \cL(\sD_z, h_z^\eta) \leq \gamma$,
for all $k \in [p]$ for an arbitrarily small value of $\gamma$.  Since
$\cL(\sD_z, h_z^\eta) = \sum_{k = 1}^p z_k \cL(\sD_k, h_z^\eta)$ is a
weighted average of the expected losses $\cL(\sD_k, h_z^\eta)$,
$k \in [p]$, the solution $\gamma$ cannot be negative.  Furthermore,
by Lemma~\ref{lemma:brouwer}, a parameter $z$ verifying that
inequality exists for any $\gamma > 0$. Thus, the global solution
$\gamma$ of Problem~\ref{eq:opt} must be close to zero. This provides
us with a simple criterion for testing the global optimality of the
solution $z$ we obtain using a DC-programming algorithm with a
starting parameter $z_0$.

\vspace{-.5cm}%

\section{Experiments}
\label{sec:eval}

This section reports the results of our experiments with our
DC-programming algorithm for finding a robust domain generalization
solution when using the squared loss. We first evaluate our algorithm
using an artificial dataset assuming known densities where we may 
compare our result to the global solution.  
Next, we evaluate our DC-programming
solution applied to a real-world sentiment analysis dataset~\cite{blitzer_acl07}.


\ignore{All code is implemented using standard
software packages for our
DC-programming solution, and will be released upon acceptance.}

\subsection{Artificial dataset}

\begin{table*}
\begin{center}
\scriptsize{
\begin{tabular}{l ccccccccccc}
\toprule
& \multicolumn{9}{c}{Test Data}\\
\cline{2-12}
& \texttt{K}& \texttt{D} & \texttt{B}	& \texttt{E}	&\texttt{KD}	&\texttt{BE}	& \texttt{DBE} & \texttt{KBE} & \texttt{KDB} &	\texttt{KDB} &	\texttt{KDBE} \\
\midrule
\texttt{K}      &       {1.46$\pm$0.08} &       {2.20$\pm$0.14} &       {2.29$\pm$0.13} &       {1.69$\pm$0.12} &       {1.83$\pm$0.08} &       {1.99$\pm$0.10} &       {2.06$\pm$0.07}      &       {1.81$\pm$0.07} &       {1.78$\pm$0.07} &       {1.98$\pm$0.06} &       {1.91$\pm$0.06}\\
\texttt{D}      &       {2.12$\pm$0.08} &       {1.78$\pm$0.08} &       {2.12$\pm$0.08} &       {2.10$\pm$0.07} &       {1.95$\pm$0.07} &       {2.11$\pm$0.07} &       {2.00$\pm$0.06}      &       {2.11$\pm$0.06} &       {2.00$\pm$0.06} &       {2.01$\pm$0.06} &       {2.03$\pm$0.06}\\
\texttt{B}      &       {2.18$\pm$0.11} &       {2.01$\pm$0.09} &       {1.73$\pm$0.12} &       {2.24$\pm$0.07} &       {2.10$\pm$0.09} &       {1.99$\pm$0.08} &       {1.99$\pm$0.05}      &       {2.05$\pm$0.06} &       {2.14$\pm$0.06} &       {1.98$\pm$0.06} &       {2.04$\pm$0.05}\\
\texttt{E}      &       {1.69$\pm$0.09} &       {2.31$\pm$0.12} &       {2.40$\pm$0.11} &       {1.50$\pm$0.06} &       {2.00$\pm$0.09} &       {1.95$\pm$0.07} &       {2.07$\pm$0.06}      &       {1.86$\pm$0.04} &       {1.84$\pm$0.06} &       {2.14$\pm$0.06} &       {1.98$\pm$0.05}\\
\texttt{unif}   &       {1.62$\pm$0.05} &       {1.84$\pm$0.09} &       {1.86$\pm$0.09} &       {1.62$\pm$0.07} &       {1.73$\pm$0.06} &       {1.74$\pm$0.07} &       {1.77$\pm$0.05}      &       {1.70$\pm$0.05} &       {1.69$\pm$0.04} &       {1.77$\pm$0.04} &       {1.74$\pm$0.04}\\
\texttt{KMM}    &       1.63$\pm$0.15   &       2.07$\pm$0.12   &       1.93$\pm$0.17   &       1.69$\pm$0.12   &       1.83$\pm$0.07   &       1.82$\pm$0.07   &       1.89$\pm$0.07&       1.75$\pm$0.07   &       1.78$\pm$0.06   &       1.86$\pm$0.09   &       1.82$\pm$0.06\\
\texttt{DW}     &       {\bf1.45$\pm$0.08}      &       {\bf1.78$\pm$0.08}      &       {\bf1.72$\pm$0.12}      &       {\bf1.49$\pm$0.06}      &       {\bf1.62$\pm$0.07}      &   {\bf1.61$\pm$0.08}       &       {\bf1.66$\pm$0.05}      &       {\bf1.56$\pm$0.04}      &       {\bf1.58$\pm$0.05}      &       {\bf1.65$\pm$0.04}      &       {\bf1.61$\pm$0.04}\\
\bottomrule
\end{tabular}
}
\end{center}
\caption{MSE on the sentiment analysis dataset of source-only
  baselines for each domain, \texttt{K, D, B, E}, 
  the uniform weighted predictor
  \texttt{unif}, \texttt{KMM}, and the distribution weighted method
  \texttt{DW} based on the learned $z$. \texttt{DW} outperforms all
  competing baselines. For all domain combinations see supplementary material.}
\label{table:sa}	
\end{table*}

We first evaluated our algorithm on a synthetic dataset. Here we
considered the following two-dimensional setup, proposed for multiple source domain study by~\citet{MansourMohriRostamizadeh2009}.  Let $g_1$, $g_2$, $g_3$, $g_4$
denote the Gaussian distributions with means $(1, 1)$, $(-1, 1)$,
$(-1, -1)$, and $(1, -1)$ and unit variance respectively.  Each domain
was generated as a uniform mixture of Gaussians: $\sD_1$ from
$\{g_1, g_2, g_3\}$ and $\sD_2$ from $\{g_2, g_3, g_4\}$.  The
labeling function is $f(x_1, x_2) = x_1^2 + x_2^2$.  We trained linear
regressors for each domain to produce base hypotheses $h_1$ and
$h_2$. Finally, as the true distribution is known for this artificial
example, we directly use the Gaussian mixture
density function to generate our $\sD_k$s.

With this data source, we used our DC-programming solution to find the
optimal mixing weights $z$. Figure~\ref{fig:synthetic_loss} shows 
the global loss vs number of iterations with the uniform
initialization $z_0 = [1/2, 1/2]$. Here, the overall objective
approaches $0.0$, the known global minimum. To verify the robustness
of the solution, we have experimented with various initial conditions
and found that the solution converges to the global solution in each
case.

\begin{figure}[t]
\centering
\includegraphics[height=.4\linewidth]{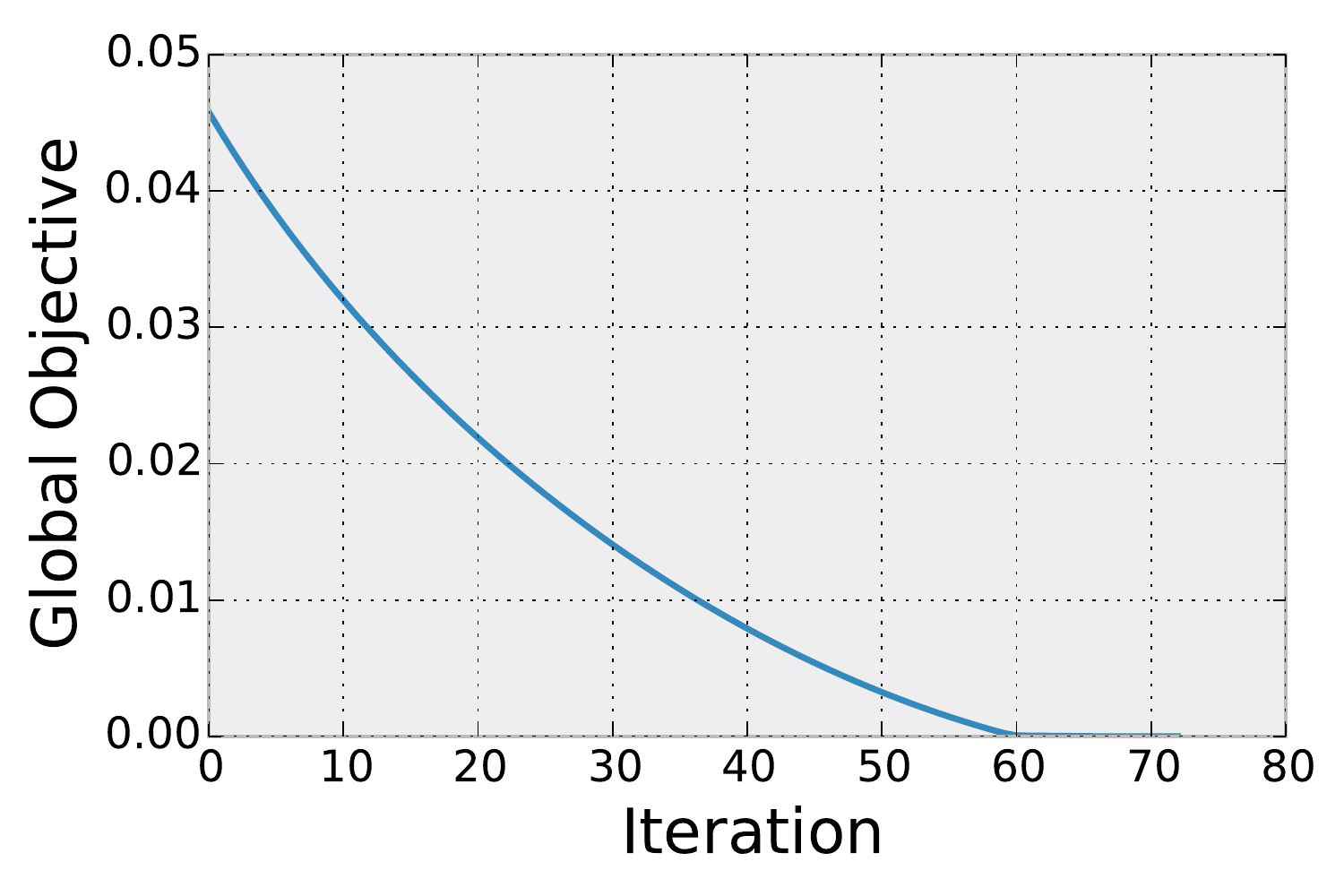}
\caption{Synthetic global loss versus iteration. Our solution converges to the global optimum of zero.}
\vskip -.15in
\label{fig:synthetic_loss}
\end{figure}

\subsection{Sentiment analysis task}

With our optimization solution verified on the synthetic example, with
known densities, we now seek to evaluate our performance on a
real-world task with estimated densities.  There are few readily
available datasets for studying multiple source adaptation and even
fewer for adaptation with regression tasks.  We use the sentiment
analysis dataset proposed by~\citet{blitzer_acl07} and used for
multiple-source adaptation by~\citet{MansourMohriRostamizadeh2008,
  MansourMohriRostamizadeh2009}.  This dataset consists of product
review text and rating labels taken from four domains: \texttt{books}
(B), \texttt{dvd} (D), \texttt{electronics} (E), and
$\texttt{kitchen}$ (K), with $2000$ samples for each domain.  We
create 10 random splits of our data into a training set ($1600$
points) and test set ($400$ points) per domain.


\textbf{Predictors:} We defined a vocabulary of $2\mathord,500$ words
that occur at least twice at the intersection of the four
domains. These words were used to define feature vectors, where every
sample point is encoded by the number of occurrences of each word. We
trained our base hypotheses using support vector regression
(SVR)\footnote{We use the libsvm package implementation
  \texttt{http://www.csie.ntu.edu.tw/÷cjlin/libsvm/}} with same
hyper-parameters as in \citep{MansourMohriRostamizadeh2008,
  MansourMohriRostamizadeh2009}.

 \begin{figure*}[t]
\vskip -.15in
  	\centering
 	\subfloat{
  	\includegraphics[width=.4\linewidth]{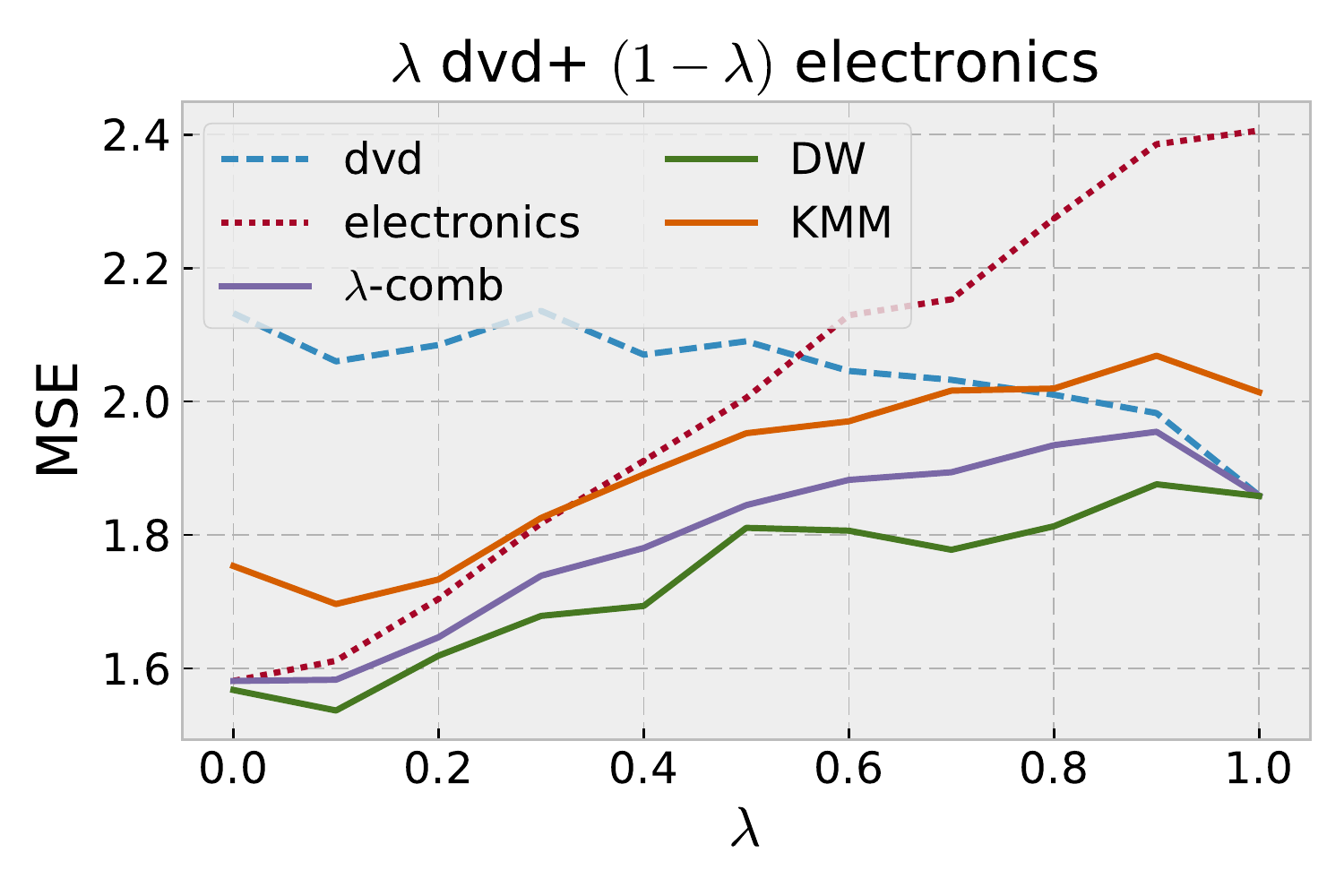}
  	\label{fig:2domain_DE}
 	}
 	\hspace{2cm}%
 	\subfloat{
  	\includegraphics[width=.4\linewidth]{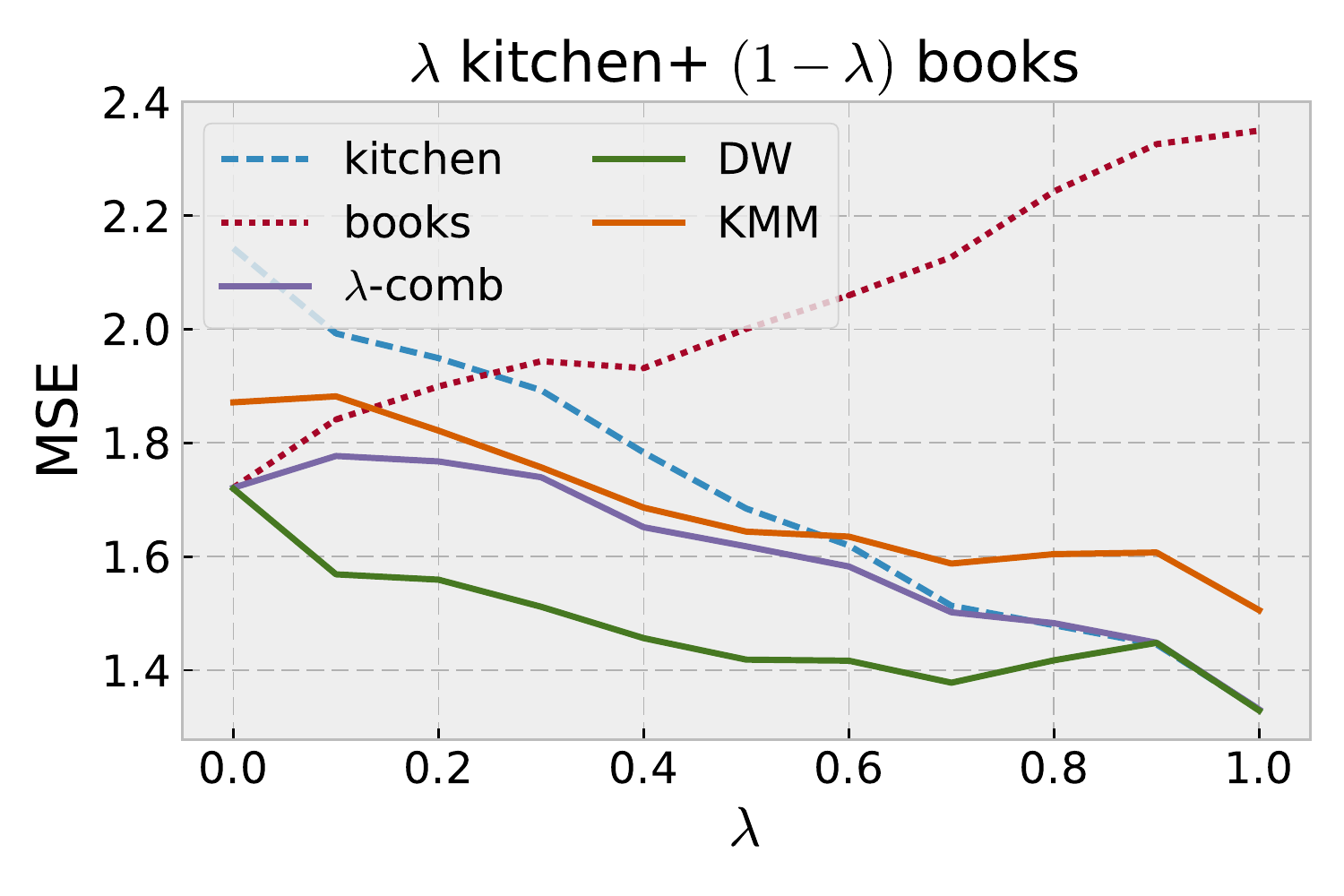}
  	\label{fig:2domain_KB}
 	}
 	\caption{\textbf{MSE sentiment analysis} under mixture of two domains: (\emph{left}) dvd and electronics or (\emph{right}) kitchen and books.}
\vskip -.15in
\end{figure*} 
 
\textbf{Density estimation:} In practice, the probability
distributions $\sD_k$ are not readily available to the
learner. However, Theorem~\ref{th:estimate} extends the learning
guarantees of our solution to the case where an estimate $\h \sD_k$ is
used in lieu of the ideal distribution $\sD_k$, for each $k \in
[p]$. Thus, here, we briefly discuss the problem of deriving an
estimate $\h \sD_k$ for each $k \in [p]$. 

We used the same vocabulary defined for feature extraction to train a
bigram statistical language model for each domain, using the OpenGrm
library \citep{roark2012opengrm}.
Next, we randomly draw a sample set $S_k$ of $10\mathord,000$ sentences
from each bigram language model. We define $\h \sD_k$ to be the 
empirical distribution of $S_k$, which is a very close estimate of the
marginal distribution of the language model, thus it is also a good estimate 
of $\sD_k$. We approximate the label of a randomly generated sample
$x_i$ by taking the average of the $h_k$s: 
$y_i = \sum_{\{k\colon x_i \in S_k\}} h_k(x_i) / \vert \{k\colon x_i \in S_k\}\vert$.
 These randomly
drawn samples were used to find the fixed-point $z$.  

Note that we only use estimates of the marginal distributions
(language models) to find $z$ and do not use any labels.  We use the
original product review text and rating labels for testing. Their
densities $\h \sD_k$ were estimated by the bigram language models
directly, therefore a close estimate of $\sD_k$.

\textbf{Baselines:} We compare against each source hypothesis, $h_k$,
as well as the uniform combination of all hypotheses (\texttt{unif}),
$\sum_{k=1}^p h_k/p$.  We also compute a privileged baseline using the
known $\lambda$ mixing parameter, \texttt{$\lambda$-comb}:
$\sum_{k=1}^p \lambda_k h_k$.
 %
 $\lambda$-\texttt{comb} is of course not accessible
 in practice since the target mixture $\lambda$ is not
 known to the user.

 Note that our method (\texttt{DW}) produces a single predictor for
 \emph{any} target distribution which is a mixture of source
 domains. For completeness, we compare against a previously proposed
 domain adaptation
 algorithm~\citep{HuangSmolaGrettonBorgwardtScholkopf2006} known as
 KMM.  However, it is important to note that the KMM model requires
 access to the unlabeled target data during adaptation and learns a
 new predictor per target domain.  Thus KMM operates in a favorable
 learning setting when compared to our solution.
 
 %
 Given source data and unlabeled target data, KMM reweights samples
 from the source domain so that the mean input feature vector over the
 source domain is close to the mean input feature vector over target
 domain in reproducing kernel Hilbert space (RKHS).  Then, it uses
 these sample weights on the source data to train a predictor using
 SVR.  In our KMM experiments, we used the same kernel function as the
 one use for training the base hypotheses.  We used the
 hyper-paramters $B = 1000$ and
 $\epsilon = \frac{\sqrt{n}}{\sqrt{n} - 1}$, where $n$ is number of
 training samples, as recommended by the authors
 \citep{HuangSmolaGrettonBorgwardtScholkopf2006}. We denote by
 \texttt{KMM} the specifically trained predictor for each test data.


\textbf{Prediction:} 
We first considered the scenario where the target is a mixture of two
source domains.  For $\lambda$ varying from $0, 0.1, 0.2, \ldots, 1$,
the test set consists of $400 \cdot \lambda$ points from the first
domain and $400 \cdot (1 - \lambda)$ points from second domain. We
used $1600$ training points for KMM ($800$ per domain), which
coincides with the sample size used to learn each base hypothesis, and
used $50\%$ of the test dataset for KMM adaptation. We report
experimental results in Figures~\ref{fig:2domain_DE}
and~\ref{fig:2domain_KB}.  They show that our distribution weighted
predictor \texttt{DW} outperforms all baseline predictors despite the
privileged learning scenarios of \texttt{$\lambda$-comb} and
\texttt{KMM}.

Next, we compared the performance of \texttt{DW} with the baseline
predictors on various target mixtures, including each domain
individually, the combination of any two and three domains, and all
four domains. More mixture combinations are available in the
supplementary material.  We used again $1600$ training points ($400$
per domain) and $50\%$ testing data for KMM algorithm.
Table~\ref{table:sa} reports the mean and standard deviations of MSE
over $10$ repetitions.  Each column corresponds to a different target
test data source.  Our distribution weighted method \texttt{DW}
outperforms all baseline predictors across all test domains. Observe
that, even when the target is a single source domain, our method
successfully outperforms the predictor which is trained and tested on
the same domain.  Notice that this can happen even when each $h_k$ is
the best predictor in some hypothesis set $H_k$ for its own domain,
since our distribution weighted predictor \texttt{DW} belongs to a
more complex family of hypotheses.

Overall, these results show that our single \texttt{DW} predictor
has strong performance
across various test domains, confirming the robustness suggested
by our theory.

\ignore{
\subsection{Human joint detection task}
\input{5_3_exp_coco_mpii}
}

\section{Conclusion}
\label{sec:conclusion}

We presented an algorithm for multiple-source domain adaptation in the
common scenario where the squared loss is used. Our algorithm computes
a distribution-weighted combination of predictors for each source
domain which we showed to admit very favorable guarantees.
We further demonstrated the effectiveness of our algorithm empirically in experiments with both artificial data sets and a sentiment analysis task. 

\bibliographystyle{abbrvnat} 
\bibliography{madap}

\clearpage
\appendix

In this appendix, we give detailed proofs of several of theorems for
multiple-source adaptation for regression (Section~\ref{app:pf}. We
present an analysis of the stationary points for our min-max
optimization problem in Section~\ref{app:stationary}.  Finally, in
Section~\ref{app:more_exp}, we report the results of several
additional experiments demonstrating the benefits of our robust
solutions.

\section{Proofs}
\label{app:pf}

\begin{reptheorem}{th:mixture}
For any $\delta > 0$, there exists $\eta > 0$ and $z\in \Delta$, such
that $\cL(\sD_\lambda,h_z^\eta)\leq \e +\delta$ \ for any mixture
parameter $\lambda \in \Delta$.
\end{reptheorem}

\begin{proof}
  We first upper bound, for an arbitrary $z \in \Delta$, the expected
loss of $h_z^\eta$ with respect to the mixture distribution $\sD_z$ defined
using the same $z$, that is $\cL(\sD_z, h_z^\eta) = \sum_{k = 1}^p z_k
\cL(\sD_k, h_z^\eta)$.
By definition of $h_z^\eta$ and $\sD_z$, we can write
\begin{align*}
& \cL(\sD_z, h_z^\eta)\\
& = \sum_{(x, y) } \sD_z(x,y) L(h_z^\eta(x), y)\\
& = \sum_{(x, y)} \sD_z(x, y) L \mspace{-2mu} \left(\mspace{-1mu}
  \sum_{k = 1}^p \frac{z_k
    \msD_k(x) + \eta  \frac{\msU(x)}{p}}{\msD_z(x) + \eta
    \msU(x)} h_k(x), y \right) \mspace{-4mu}.
\end{align*}
By convexity of $L$, this implies that
\begin{align*}
\lefteqn{\cL(\sD_z, h_z^\eta)}\\
& \leq \sum_{(x, y)} \sD_z(x, y) \sum_{k=1}^p \frac{z_k
    \msD_k(x) + \eta  \frac{\msU(x)}{p}}{\msD_z(x) + \eta
    \msU(x)} L\big(h_k(x), y \big)\\
& \leq \sum_{(x, y)} \sD_z(y | x) \msD_z(x) \sum_{k=1}^p \frac{z_k
    \msD_k(x) + \eta  \frac{\msU(x)}{p}}{\msD_z(x) + \eta
    \msU(x)} L\big(h_k(x), y \big)\\
& \leq \sum_{(x, y)} \sD_z(y | x) \sum_{k=1}^p \bigg( z_k
    \msD_k(x) + \eta  \frac{\msU(x)}{p} \bigg) L\big(h_k(x), y \big).
\end{align*}
Next, observe that
$\sD_{z}(y|x)=\sum_{k=1}^{p}\frac{z_{k}\msD_{k}(x)}{\msD_z(x)}
\sD_{k}(y|x)=\sD_k(y|x)$ for any $k \in [1, p]$ 
since by assumption $\sD_{k}(y | x)$ does not
depend on $k$. Thus,
\begin{align*}
\lefteqn{\cL(\sD_z, h_z^\eta)}\\
& \leq \sum_{(x, y)} \sD_z(y | x) \sum_{k=1}^p \bigg( z_k
    \msD_k(x) + \eta  \frac{\msU(x)}{p} \bigg) L\big(h_k(x), y \big)\\
    &  =  \sum_{(x, y)} \mspace{-2mu}\sum_{k=1}^p \bigg(\mspace{-2mu} z_k
    \sD_k(y | x) \msD_k(x) + \eta  \sD_k(y | x) \frac{\msU(x)}{p} \mspace{-2mu}\bigg)
  \mspace{-2mu} L\big(h_k(x), y \big)\\
  & = \sum_{k = 1}^p z_k \cL(\sD_k, h_k) + \frac{\eta}{p} \sum_{k=1}^p \sum_{(x,
  y)} \sD_k(y | x) \msU(x)
  L\big(h_k(x), y \big)\\
  &  \leq  \sum_{k = 1}^p z_k \cL(\sD_k, h_k) + \eta M\\
  & \leq  \sum_{k = 1}^p z_k \e + \eta M = \e + \eta M.
\end{align*}
Now, choose $z \in \Delta$ as in the statement of
Lemma~\ref{lemma:brouwer}. Then,
the following holds for any mixture distribution $\sD_\lambda$:
\begin{align*}
\cL(\sD_\lambda,h_z^\eta)
&= \sum_{k=1}^p \lambda_k \cL(\sD_k,h_z^\eta) \\
& \leq \sum_{k=1}^p \lambda_k \big( \cL(\sD_z,h_z^\eta) +\eta' \big)\\
& = \cL(\sD_z,h_z^\eta) +\eta'
 \leq \e + \eta M + \eta'.
\end{align*}
Setting $\eta = \frac{\delta}{2M}$ and $\eta' = \frac{\delta}{2}$
concludes the proof.
\end{proof}

\begin{reptheorem}{th:arbitrary}
Let $\sD_T$ be an arbitrary target distribution.
For any $\delta > 0$, there exists
$\eta > 0$ and $z \in \Delta$, such that
the following inequality holds for any $\alpha > 1$:
\begin{equation*}
\cL(\sD_T, h_z^\eta) 
\leq \Big[(\e + \delta) \, \sfd_\alpha(\sD_T \parallel \cD)
\Big]^{\frac{\alpha - 1}{\alpha}} M^{\frac{1}{\alpha }}.
\end{equation*}
\end{reptheorem}
\begin{proof}
  For any hypothesis $h\colon \cX \to \cY $ and any
  distribution $\sD$, by H\"older's inequality, the following holds:
\begin{align*}
\cL&(\sD_T, h)
 = \sum_{(x, y) \in \cX \times \cY} \sD_T(x, y) L(h(x), y)\\
& = \sum_{(x, y) \in \cX \times \cY} \left [\frac{\sD_T(x, y)}{\sD(x,
  y)^{\frac{\alpha - 1}{\alpha}}}\right] \left[ \sD(x,
  y)^{\frac{\alpha - 1}{\alpha}} L(h(x), y) \right]\\
& \leq \left [\sum_{(x, y)} \frac{\sD_T(x, y)^\alpha}{\sD(x,
  y)^{\alpha - 1}}\right]^{\frac{1}{\alpha}} \left[ \sum_{(x, y)} \sD(x,
  y) L(h(x), y)^{\frac{\alpha}{\alpha - 1}} \right]^{\mspace{-8mu}\frac{\alpha -
  1}{\alpha}} \mspace{-25mu}.
\end{align*}
Thus, by definition of $\sfd_\alpha$, for any $h$ such that
$L(h(x), y) \leq M$ for all $(x, y)$, we can write
\begin{align*}
& \cL(\sD_T, h)\\
& \leq  \sfd_\alpha(\sD_T \parallel \sD)^{\frac{\alpha -
  1}{\alpha}} \mspace{-8mu} \left[ \sum_{(x, y)} \sD(x,
  y) L(h(x), y)^{\frac{\alpha}{\alpha - 1}} \mspace{-4mu} \right]^{\mspace{-8mu}\frac{\alpha -
  1}{\alpha}}\\
& = \sfd_\alpha(\sD_T \parallel \sD)^{\frac{\alpha - 1}{\alpha}}
  \mspace{-8mu} \left[ \sum_{(x, y)} \sD(x,
  y) L(h(x), y) L(h(x), y)^{\frac{1}{\alpha - 1}} \mspace{-4mu} \right]^{\mspace{-8mu}\frac{\alpha -
  1}{\alpha}}\\
& \leq \sfd_\alpha(\sD_T \parallel \sD)^{\frac{\alpha - 1}{\alpha}} \left[ \sum_{(x, y)} \sD(x,
  y) L(h(x), y) M^{\frac{1}{\alpha - 1}} \right]^{\mspace{-8mu}\frac{\alpha -
  1}{\alpha}}\\
& \leq \Big[ \sfd_\alpha(\sD_T \parallel \sD) \, \cL(\sD, h)
  \Big]^{\mspace{-6mu}\frac{\alpha - 1}{\alpha}} M^{\frac{1}{\alpha }}.
\end{align*}
Now, by Theorem~\ref{th:mixture}, there exists $z \in \Delta$ and
$\eta > 0$ such that $\cL(\sD, h_z^\eta) \leq \e + \delta$ for any
mixture distribution $\sD \in \cD$. Thus, in view of the previous
inequality, we can write,for any $\sD \in \cD$,
\begin{equation*}
\cL(\sD_T, h_z^\eta) \leq \Big[(\e + \delta) \, \sfd_\alpha(\sD_T \parallel \sD)
 \Big]^{\frac{\alpha - 1}{\alpha}} M^{\frac{1}{\alpha }}.
\end{equation*}
Taking the infimum of the right-hand side over all $\sD \in \cD$
completes the proof.
\end{proof}

\begin{reptheorem}{th:estimate}
Let $\sD_T$ be an arbitrary target distribution.
Then, for any $\delta > 0$, there exists $\eta > 0$ and
$z \in \Delta$, such that the following inequality holds for any
$\alpha > 1$:
\begin{equation*}
\cL(\sD_T, \h h_z^\eta) 
\leq \Big[(\h \e + \delta) \, \sfd_\alpha(\sD_T \parallel \h \cD)
\Big]^{\frac{\alpha - 1}{\alpha}} M^{\frac{1}{\alpha }}.
\end{equation*}
\end{reptheorem}
\begin{proof}
By the first part of the proof of Theorem~\ref{th:arbitrary},
for any $k \in [p]$ and $\alpha > 1$, the following
inequality holds:
\begin{align*}
\cL(\h \sD_k, h_k) 
& \leq \Big[\sfd_\alpha(\h \sD_k \parallel  \sD_k) \, \cL(\sD_k, h_k)
\Big]^{\frac{\alpha - 1}{\alpha}} M^{\frac{1}{\alpha }}\\
& \leq \Big[\e \, \sfd_\alpha(\h \sD_k \parallel  \sD_k) 
\Big]^{\frac{\alpha - 1}{\alpha}} M^{\frac{1}{\alpha }} \leq \h \e.
\end{align*}
We can now apply the result of Theorem~\ref{th:arbitrary} (with
$\h \e$ instead of $\e$ and $\h \sD_k$ instead of $\sD_k$).
In view that, there exists $\eta > 0$ and $z \in \Delta$
such that
\begin{equation*}
\cL(\sD_T, h_z^\eta) 
\leq \Big[(\h \e + \delta) \, \sfd_\alpha(\sD_T \parallel \h \sD)
\Big]^{\frac{\alpha - 1}{\alpha}} M^{\frac{1}{\alpha }},
\end{equation*}
for any distribution $\h \sD$ in the family $\h \cD$. Taking the
infimum over all $\h \sD$ in $\h \cD$ completes the proof.
\end{proof}

\newpage
\section{Stationary Point Analysis}
\label{app:stationary}

In this section we explicitly analyze the stationary point conditions
for the min-max optimization problem:
\begin{align*}
  \min_{z} & \ f(z) \\
\text{s.t.} & \ g_k(z)  \leq 0, \forall k \in [p], \ h(z)  = 0,
\end{align*}
where
\begin{align*}
  f(z) &=\max_{k\in[p]}\cL(\sD_k,h_z^\eta)-\cL(\sD_z,h_z^\eta), \\
  g_k(z) &=-z_k , \ 
  h(z) =\sum_{k=1}^p z_k-1.
\end{align*}

For simplicity, we assume $\sD_k(x) >0, \forall k\in [p], \forall x\in \cX$, and let $\eta=0$. 
D.C. algorithm converges to a stationary point $z^* \in \Delta $. 
Under suitable constraint qualification, there exists KKT multipliers 
$\alpha_k,k\in[p]$ and $\beta$ such that 
\begin{align}
  \label{eq:kkt}
 - \left[\sum_{k=1}^p\alpha_k \nabla g_k \left(z^*\right)+
  \beta\nabla_z h\left(z^* \right)\right]	
  &\in \partial_z f(z^*) \\
  \alpha_k g_k(z^*)=0,\ \alpha_{k} &\ge0,\ k\in[p] \nonumber
\end{align}
where the set of sub-gradients of $f(z)$ is subset of a convex hull:
\begin{equation*}
  \partial_{z}f(z) \subseteq \text{Co}
  \left\{ \nabla_{z}\cL(\sD_k,h_z)-\nabla_z \cL(\sD_z,h_z), k\in[p] \right\}.
\end{equation*}
Observer that, for any fixed $x\in\cX$ and 
$z\in\mathbb{R}^p$ , $z^T\nabla_z h_z(x)=0$. 
It follows that $z^T\nabla_{z}\cL(\sD_k,h_z)=0,\ \forall k\in[p]$. 
Thus, by taking inner product of $z^{*}$ and elements in $\partial_{z}f(z^{*})$, 
we obtain
\begin{equation*}
 z^{*T}\partial_zf(z^*)=- z^{*T}\nabla_z\cL(\sD_{z^{*}},h_{z^{*}})
=- \cL(\sD_{z^*},h_{z^*}).
\end{equation*}
On the other hand, 
\begin{equation*}
  z^{*T}\left[\sum_{k=1}^{p}\alpha_{k}\nabla g_{k}(z^{*})+\beta\nabla_{z}h(z^{*})\right]
 =-\sum_{k}\alpha_{k}z_{k}^{*}+\beta=\beta.
\end{equation*}
Therefore $\beta=\cL(\sD_{z^{*}},h_{z^{*}})$, thus we obtain the following
simplified stationary point conditions: there exits $\alpha_k,k\in[p]$ such that
\begin{align*}
  \left[\cL(\sD_{z^{*}},h_{z^{*}})-\alpha_{1},\ldots,\cL(\sD_{z^{*}},h_{z^{*}})-\alpha_{p}\right]^{T}	
  &\in-\partial_{z}f\left(z^{*}\right) \\ 
  \alpha_{k}z_{k}^{*}=0,\ \alpha_{k}	&\ge0,\ k\in[p].
\end{align*}

\newpage
\section{More Experiment Results}
\label{app:more_exp}
In this section we provide more experiment results to show that our 
distribution weighted predictor \texttt{DW} is robust across various test data mixtures.
\begin{table}[t]
\begin{center}
\scriptsize{
\begin{tabular}{l cccc}
\toprule
& \multicolumn{4}{c}{Test Data}\\
\cline{2-5}
& \texttt{KB}& \texttt{KE}& \texttt{DB}& \texttt{DE} \\
\midrule
\texttt{K}      &       {1.87$\pm$0.08} &       {1.57$\pm$0.06} &       {2.25$\pm$0.08} &       {1.94$\pm$0.10}\\
\texttt{D}      &       {2.12$\pm$0.07} &       {2.11$\pm$0.05} &       {1.95$\pm$0.06} &       {1.94$\pm$0.06}\\
\texttt{B}      &       {1.96$\pm$0.07} &       {2.21$\pm$0.06} &       {1.87$\pm$0.07} &       {2.13$\pm$0.05}\\
\texttt{E}      &       {2.05$\pm$0.05} &       {1.60$\pm$0.05} &       {2.36$\pm$0.07} &       {1.91$\pm$0.07}\\
\texttt{unif}   &       {1.74$\pm$0.05} &       {1.62$\pm$0.04} &       {1.85$\pm$0.05} &       {1.73$\pm$0.06}\\
\texttt{KMM}    &       1.78$\pm$0.12   &       1.65$\pm$0.10   &       1.97$\pm$0.13   &       1.88$\pm$0.08\\
\texttt{DW}     &       {\bf1.59$\pm$0.05}      &       {\bf1.47$\pm$0.04}      &       {\bf1.75$\pm$0.05}      &       {\bf1.64$\pm$0.05}\\
\bottomrule
\end{tabular}
}
\end{center}
\caption{MSE on the sentiment analysis dataset. Using the learned $z$, our distribution weighted method \texttt{DW} outperforms all competing baselines in terms of robustness across test domains. We compare against source only baselines for each domain, \texttt{K, D, B, E},  a uniform weighted predictor \texttt{unif}, and \texttt{KMM}. }
\label{table:sa_more_pairs}	
\end{table}

\begin{table*}[t]
\begin{center}
\scriptsize{
\begin{tabular}{l cccccccccc}
\toprule
& \multicolumn{9}{c}{Test Data}\\
\cline{2-11}
& \texttt{\textbf{K}DBE}& \texttt{K\textbf{D}BE}& \texttt{KD\textbf{B}E}& \texttt{KDB\textbf{E}} & \texttt{\textbf{KD}BE}& \texttt{\textbf{K}D\textbf{B}E}& \texttt{\textbf{K}DB\textbf{E}}& \texttt{K\textbf{DB}E} & \texttt{K\textbf{D}B\textbf{E}} & \texttt{KD\textbf{BE}}\\
\midrule
\texttt{K}      &       {1.78$\pm$0.05} &       {1.94$\pm$0.10} &       {1.96$\pm$0.08} &       {1.84$\pm$0.07} &       {1.86$\pm$0.10} &       {1.87$\pm$0.07} &       {1.79$\pm$0.08}      &       {1.96$\pm$0.10} &       {1.89$\pm$0.10} &       {1.89$\pm$0.08}\\
\texttt{D}      &       {2.02$\pm$0.10} &       {1.98$\pm$0.10} &       {2.06$\pm$0.11} &       {2.05$\pm$0.09} &       {2.01$\pm$0.13} &       {2.05$\pm$0.12} &       {2.04$\pm$0.12}      &       {2.03$\pm$0.12} &       {2.02$\pm$0.13} &       {2.06$\pm$0.12}\\
\texttt{B}      &       {2.01$\pm$0.12} &       {2.01$\pm$0.14} &       {1.94$\pm$0.14} &       {2.06$\pm$0.11} &       {2.01$\pm$0.15} &       {1.98$\pm$0.14} &       {2.05$\pm$0.13}      &       {1.98$\pm$0.15} &       {2.04$\pm$0.14} &       {2.01$\pm$0.13}\\
\texttt{E}      &       {1.93$\pm$0.08} &       {2.04$\pm$0.10} &       {2.08$\pm$0.10} &       {1.89$\pm$0.08} &       {2.00$\pm$0.10} &       {2.01$\pm$0.09} &       {1.91$\pm$0.08}      &       {2.08$\pm$0.10} &       {1.97$\pm$0.08} &       {1.99$\pm$0.08}\\
\texttt{unif}   &       {1.69$\pm$0.06} &       {1.74$\pm$0.07} &       {1.75$\pm$0.08} &       {1.70$\pm$0.06} &       {1.72$\pm$0.09} &       {1.72$\pm$0.08} &       {1.69$\pm$0.07}      &       {1.75$\pm$0.08} &       {1.72$\pm$0.08} &       {1.73$\pm$0.08}\\
\texttt{KMM}    &       1.83$\pm$0.12   &       1.92$\pm$0.14   &       1.87$\pm$0.15   &       1.85$\pm$0.13	&       1.85$\pm$0.16   &       1.86$\pm$0.14   &       1.85$\pm$0.15   &       1.90$\pm$0.14   &       1.89$\pm$0.16   &       1.90$\pm$0.14\\
\texttt{DW}     &       {\bf1.55$\pm$0.08}      &       {\bf1.62$\pm$0.08}      &       {\bf1.59$\pm$0.09}      &       {\bf1.56$\pm$0.08}      &       {\bf1.58$\pm$0.10}      &   {\bf1.57$\pm$0.10}       &       {\bf1.55$\pm$0.09}      &       {\bf1.61$\pm$0.10}      &       {\bf1.59$\pm$0.08}      &       {\bf1.58$\pm$0.09}\\
\bottomrule
\end{tabular}
}
\end{center}
\caption{MSE on the sentiment analysis dataset. Using the learned $z$, our distribution weighted method \texttt{DW} outperforms all competing baselines in terms of robustness across test domains. We compare against source only baselines for each domain, \texttt{K, D, B, E},  a uniform weighted predictor \texttt{unif}, and \texttt{KMM}. }
\label{table:sa_more_lambdas}	
\end{table*}

Table~\ref{table:sa_more_pairs} shows MSE on various combinations of 
two domains that is not included in Table~\ref{table:sa}. 

Table~\ref{table:sa_more_lambdas} reports MSE on additionally test domain mixtures.
The first four target mixtures correspond to various orderings of $\{0.4, 0.2, 0.2, 0.2\}$.
The next six target mixtures correspond to various orderings of $\{0.3, 0.3, 0.2, 0.2\}$. 
In column title we bold the domain(s) with highest weight.
\end{document}